\documentclass[journal]{IEEEtran}

\usepackage[utf8]{inputenc}
\usepackage[T1]{fontenc}
\usepackage{amsmath,amssymb,amsthm}
\usepackage{graphicx}
\usepackage{booktabs}
\usepackage[hyphens]{url}
\usepackage[breaklinks=true,hidelinks,hypertexnames=false]{hyperref}
\usepackage{cite}
\usepackage{xcolor}
\usepackage{multirow}
\usepackage{enumitem}
\usepackage{algorithm}
\usepackage{algorithmic}
\usepackage[section]{placeins}
\usepackage{stfloats}
\usepackage{microtype}

\newtheorem{proposition}{Proposition}
\newtheorem{theorem}{Theorem}
\newtheorem{corollary}{Corollary}

\renewenvironment{proof}[1][Proof]{\noindent\textit{#1.}\space}{\hfill$\blacksquare$\par\medskip}

\begin{document}

\title{PCA-Driven Adaptive Sensor Triage for Edge AI Inference}

\author{
\IEEEauthorblockN{Ankit Hemant Lade\textsuperscript{1}, Sai Krishna Jasti\textsuperscript{2}, Nikhil Sinha\textsuperscript{3}, Indar Kumar\textsuperscript{4}, Akanksha Tiwari\textsuperscript{5}}
\IEEEauthorblockA{Independent Researchers}
\IEEEauthorblockA{\textsuperscript{1}ankitlade12@gmail.com, \textsuperscript{2}jsaikrishna379@gmail.com, \textsuperscript{3}sinha.nikhil77@gmail.com, \textsuperscript{4}indarkarhana@gmail.com, \textsuperscript{5}akankshat2803@gmail.com}
}

\maketitle

% ============================================================
\begin{abstract}
Multi-channel sensor networks in industrial IoT often exceed available bandwidth. We propose \textbf{PCA-Triage}, a streaming algorithm that converts incremental PCA loadings into proportional per-channel sampling rates under a bandwidth budget. PCA-Triage runs in $O(wdk)$ time with zero trainable parameters (0.67\,ms per decision).

We evaluate on 7 benchmarks (8--82 channels) against 9 baselines. PCA-Triage is the best unsupervised method on 3 of 6 datasets at 50\% bandwidth, winning 5 of 6 against every baseline with large effect sizes ($r = 0.71$--$0.91$). On TEP, it achieves F1\,=\,$0.961 \pm 0.001$---within 0.1\% of full-data performance---while maintaining F1\,$>$\,0.90 at 30\% budget. Targeted extensions push F1 to 0.970. The algorithm is robust to packet loss and sensor noise (3.7--4.8\% degradation under combined worst-case).
\end{abstract}

\begin{IEEEkeywords}
sensor triage, bandwidth allocation, incremental PCA, edge AI, IoT, fault detection, adaptive sampling, streaming algorithms
\end{IEEEkeywords}

% ============================================================
\section{Introduction}
\label{sec:intro}

Consider a chemical plant instrumented with 200 sensors monitoring temperatures, pressures, flow rates, and valve positions. Each sensor streams data at 1\,Hz, generating 1.2\,MB per minute. An edge gateway must relay this data over a constrained industrial network to a fault detection system, but the available bandwidth supports only 50\% of the full data volume. The operator faces a deceptively simple question: \textit{which sensors should receive more of the limited bandwidth, and which can tolerate lower sampling rates without compromising fault detection?}

The naive answer---sample every sensor at a uniformly reduced rate---wastes bandwidth on channels that carry redundant or uninformative signals. The ideal allocation is \textit{adaptive}: it shifts bandwidth toward informative channels as operating conditions change.

\subsection{Problem Formulation}

We formalize this as the \textit{sensor triage} problem. Given $d$ sensor channels and a total bandwidth budget $B \in (0, 1]$, allocate a per-channel sampling rate $r_j \in [r_{\min}, 1]$ for each channel $j = 1, \ldots, d$, subject to the budget constraint (Eq.~\ref{eq:budget}):
\begin{equation}
\frac{1}{d} \sum_{j=1}^{d} r_j \leq B
\label{eq:budget}
\end{equation}
such that a downstream fault detection model trained on triaged data maintains accuracy close to one trained on full-rate data. The allocation must be updated online as new data arrives.

This problem sits at the intersection of three well-studied areas---yet none addresses it directly. \textit{PCA-based process monitoring}~\cite{chen2013pca, processmonitoring2024survey, downs1993tep} uses principal component loadings for post-hoc fault diagnosis only. \textit{Streaming PCA algorithms}~\cite{weng2003ccipca, balzano2010grouse, yang2018historypca} output basis matrices rather than channel-level decisions. \textit{Adaptive sampling in IoT}~\cite{benaboud2021adaptive, giordano2023energy} applies the same rate to every channel.

\subsection{Key Insight: Correlation Structure}

The core insight behind PCA-Triage is that PCA naturally captures inter-channel correlations that variance-based methods miss. Fig.~\ref{fig:correlation} shows the TEP sensor correlation matrix during fault-free operation. Several sensor clusters exhibit $|r| > 0.7$ correlation (e.g., temperature sensors xmeas\_9--xmeas\_11, flow sensors xmeas\_1--xmeas\_4). When channels A and B are highly correlated, PCA detects the redundancy and concentrates importance on the more informative channel, freeing bandwidth for independent sensors. Variance-based allocation, by contrast, treats each channel independently and cannot exploit this structure.

\begin{figure}[!htbp]
\centering
\includegraphics[width=\columnwidth]{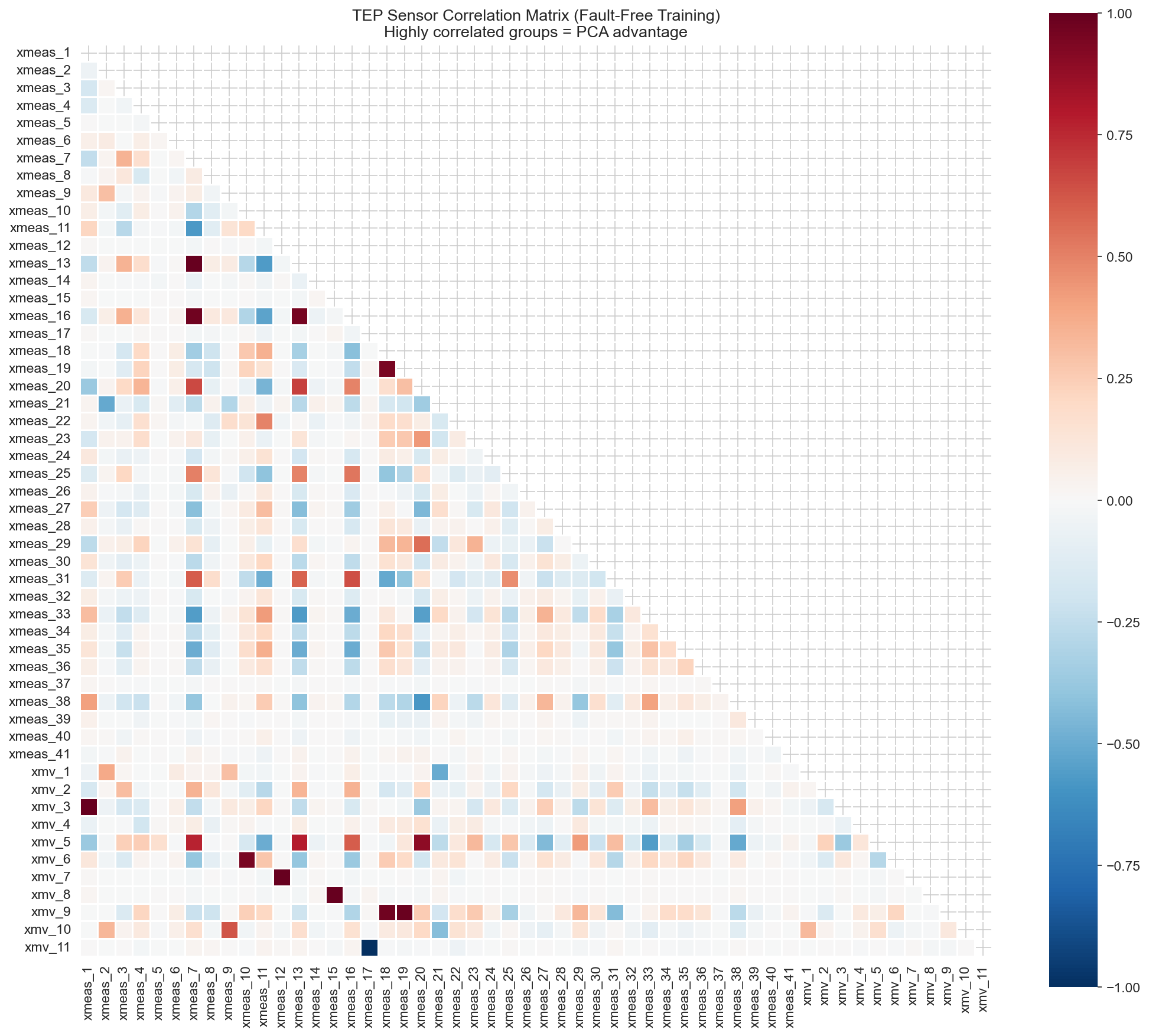}
\caption{TEP sensor correlation matrix during fault-free operation. Highly correlated clusters (dark red/blue blocks) represent redundancy that PCA-Triage exploits for efficient bandwidth allocation.}
\label{fig:correlation}
\end{figure}

\subsection{Contributions}

\begin{enumerate}[leftmargin=*]
\item \textbf{Algorithm.} We introduce PCA-Triage, a streaming algorithm that converts incremental PCA loadings into proportional per-channel sampling rates under a bandwidth budget. The algorithm is unsupervised, runs at $O(wdk)$ per window, and requires zero trainable parameters. We further present three extensions---hybrid PCA+variance importance scoring, linear interpolation reconstruction, and power-law sharpened rate allocation---and quantify each contribution via ablation.

\item \textbf{Theoretical analysis.} We provide formal guarantees on budget constraint satisfaction (Proposition~\ref{prop:budget}), importance score convergence (Proposition~\ref{prop:convergence}), and PCA's advantage over variance-based allocation in the presence of inter-channel correlation (Theorem~\ref{thm:correlation}).

\item \textbf{Comprehensive empirical validation.} We evaluate on 7 benchmarks (8--82 sensors) against 9 baselines across 1,000+ experiments with 3--5 seeds, 3 classifiers, and 17 distinct analyses, with full statistical testing (Wilcoxon signed-rank, Friedman ranking). PCA-Triage is the best unsupervised method on 3 of 6 Pareto-evaluated datasets at 50\% bandwidth, and competitive on the remainder.

\item \textbf{Edge viability.} PCA-Triage completes each decision in 0.67\,ms on a single CPU core, adapts to fault onset with speed controlled by $\lambda$ (0--3 windows at $\lambda \leq 0.80$), and scales to 500 channels under the 5\,ms edge target.
\end{enumerate}

% ============================================================
\section{Related Work}
\label{sec:related}

We organize the literature along two axes---\textit{static vs.\ streaming} and \textit{fixed-rule vs.\ data-driven}---to identify the gap our method fills (Table~\ref{tab:positioning}).

\subsection{PCA for Process Monitoring}

PCA has been the workhorse of industrial process monitoring for over two decades~\cite{chen2013pca, processmonitoring2024survey}. The standard approach monitors Hotelling's $T^2$ and Squared Prediction Error (SPE) statistics. On the Tennessee Eastman Process~\cite{downs1993tep, rieth2017tep}, PCA-based methods achieve fault detection rates exceeding 93\%~\cite{chen2013pca}. Loading vectors reveal which sensors contribute most to each principal component---used for \textit{post-hoc} fault diagnosis~\cite{mnassri2015feature}.

However, all existing PCA-based methods assume full-resolution data is available. None address bandwidth-constrained scenarios where the system must decide \textit{which channels to prioritize before data collection}.

\subsection{Streaming PCA and Online Feature Selection}

Oja's rule~\cite{oja1982simplified} provides the foundational online PCA update. CCIPCA~\cite{weng2003ccipca} extends to multiple components at $O(dk)$ cost. GROUSE~\cite{balzano2010grouse} tracks subspaces on the Grassmannian manifold. History PCA~\cite{yang2018historypca} achieves faster convergence. Balzano et al.~\cite{balzano2018streaming} survey these algorithms comprehensively. Recent work on low-precision streaming PCA~\cite{lowprecision2025streaming} demonstrates continued algorithmic progress. The IncrementalPCA implementation in scikit-learn~\cite{ross2008incremental, levy2000sequential} uses a batch-wise SVD update that is numerically stable and widely deployed. Randomized SVD~\cite{halko2011finding} offers further speedups for large $d$.

Online feature selection methods---Alpha-investing~\cite{zhou2006alpha}, SAOLA~\cite{yu2014saola}, OSFS~\cite{wu2013osfs}, OSFSW~\cite{you2018osfsw}---produce \textit{binary} include/exclude decisions and require supervised labels~\cite{zaman2022streaming}. Even the most recent OSSFS~\cite{zhou2025ossfs} (2025) remains binary and supervised.

None of these methods produce \textit{proportional} per-channel sampling rates under a budget constraint.

\subsection{Adaptive Sampling in IoT}

Ben-Aboud et al.~\cite{benaboud2021adaptive} use Kalman predictions to adapt temporal sampling intervals. Giordano et al.~\cite{giordano2023energy} achieve 85--95\% of optimal throughput with 527 CPU cycles per day. ML-DSRA~\cite{mldsra2024parametric} auto-tunes temporal sampling parameters. Compressive sensing~\cite{cs2017survey} exploits signal sparsity but requires expensive reconstruction.

All methods treat sensors identically---the same temporal rate for every channel. Our method answers \textit{which channels} deserve more bandwidth.

\subsection{Edge AI and Attention-Based Methods}

Edge AI research~\cite{gill2024edgeai} focuses on model-side optimization (compression, distillation, split computing). Attention mechanisms~\cite{cao2021mafs, vtt2024transformer, xu2023dcff} learn channel importance but at $O(d^2)$ cost with hundreds of thousands of parameters---too expensive for edge deployment.

Our method optimizes the \textit{data side}---complementary to model compression. PCA-Triage runs at $O(wdk)$ with zero trainable parameters.

\subsection{Positioning}

Table~\ref{tab:positioning} summarizes the positioning of PCA-Triage relative to existing approaches across two axes: static vs.\ streaming and fixed-rule vs.\ data-driven. Recent work has explored correlation-aware sensor selection~\cite{bacciu2016unsupervised, ghosh2021adaptive, ghosh2021edge} and adaptive sampling-rate allocation under resource budgets~\cite{yang2023freqsense}. PCA-Triage builds on this line of work by using incremental PCA specifically as the importance engine, producing proportional (not binary) per-channel rates under a total bandwidth constraint, in a zero-parameter streaming setting.

\begin{table}[!htbp]
\centering
\caption{Positioning matrix. PCA-Triage combines streaming operation with data-driven proportional allocation (bottom-right cell). Prior work~\cite{bacciu2016unsupervised, ghosh2021adaptive} explores correlation-aware sensor selection but in batch or binary settings.}
\begin{tabular}{@{}lcc@{}}
\toprule
& \textbf{Static/Batch} & \textbf{Streaming/Adaptive} \\
\midrule
\textbf{Fixed Rules} & Uniform, SoD & DSRA, AIMD \\
\textbf{Data-Driven} & Batch PCA, Offline FS & \textbf{PCA-Triage (Ours)} \\
\bottomrule
\end{tabular}
\label{tab:positioning}
\end{table}

% ============================================================
\section{Method}
\label{sec:method}

\subsection{System Architecture}

Fig.~\ref{fig:architecture} shows the PCA-Triage pipeline. Raw sensor data flows through a sliding window buffer into IncrementalPCA, which produces loadings and singular values. The importance scorer converts these into per-channel scores, and the rate allocator distributes the bandwidth budget proportionally. The full procedure is summarized in Algorithm~\ref{alg:pcatriage}.

\begin{figure*}[!htbp]
\centering
\includegraphics[width=0.85\textwidth]{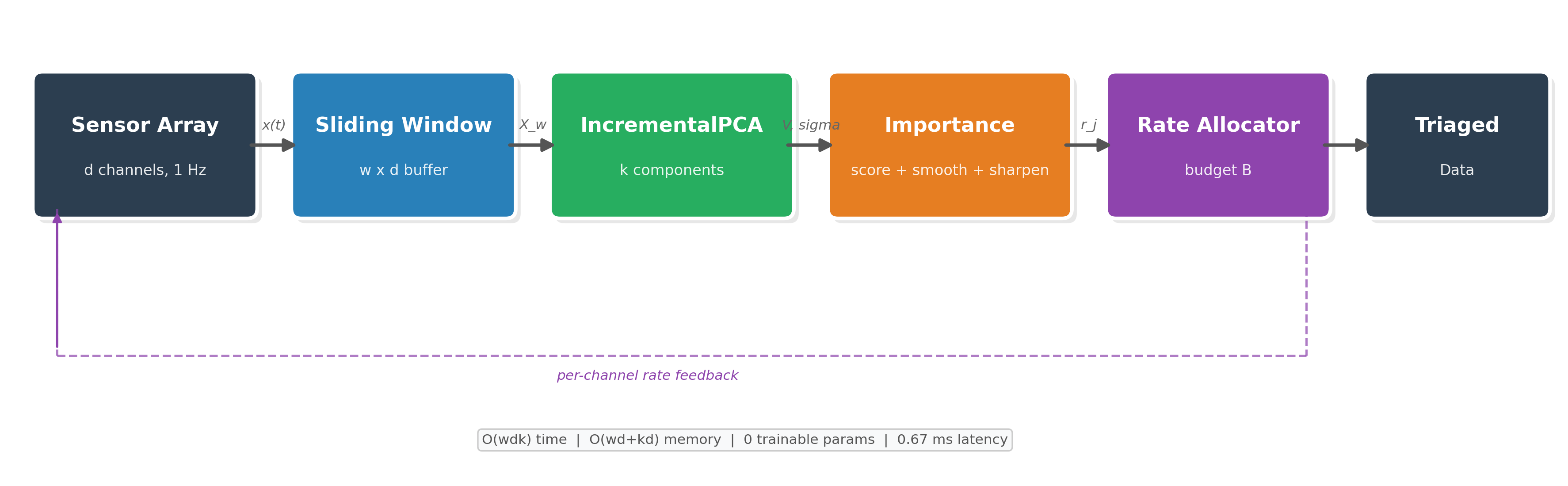}
\caption{PCA-Triage system architecture. Sensor data passes through a sliding window into IncrementalPCA, which extracts loadings $\mathbf{V}$ and singular values $\boldsymbol{\sigma}$. Importance scores $s_j$ are computed from weighted loadings and fed to the rate allocator, which distributes the bandwidth budget $B$ across channels. Time: $O(wdk)$; Memory: $O(wd + kd)$; Trainable parameters: 0.}
\label{fig:architecture}
\end{figure*}

\subsection{Channel Importance Scoring}

For a window $\mathbf{X}_w \in \mathbb{R}^{w \times d}$, PCA-Triage fits IncrementalPCA to extract loadings $\mathbf{V} \in \mathbb{R}^{k \times d}$ and singular values $\boldsymbol{\sigma} \in \mathbb{R}^k$. The importance score for channel $j$:
\begin{equation}
s_j = \sum_{i=1}^{k} \sigma_i \cdot V_{ij}^2
\label{eq:importance}
\end{equation}
The weighting by $\sigma_i$ ensures that channels contributing to high-variance components receive proportionally higher importance. Squaring the loading $V_{ij}^2$ ensures non-negativity and penalizes small contributions.

\textbf{Hybrid scoring.} On datasets with limited inter-channel correlation (e.g., SKAB with 8 sensors), pure PCA scores may not capture enough structure. We optionally blend PCA importance with per-channel variance:
\begin{equation}
s_j^{\text{hybrid}} = \alpha \cdot s_j^{\text{PCA}} + (1 - \alpha) \cdot \frac{\text{Var}(x_j)}{\sum_{j'} \text{Var}(x_{j'})}
\label{eq:hybrid}
\end{equation}
where $\alpha \in [0, 1]$ controls the blend. High $\alpha$ (e.g., 0.8) is used for richly correlated datasets (TEP), while low $\alpha$ (e.g., 0.0--0.4) is used for low-correlation or few-channel datasets (SKAB, PSM). Per-dataset $\alpha$ values are: TEP $\alpha{=}0.8$, SMD $0.75$, MSL $0.7$, PSM $0.4$, HAI $0.05$, SKAB $0.15$, SWaT $0.7$.

Scores are smoothed via exponential moving average with forgetting factor $\lambda \in (0, 1]$:
\begin{equation}
\bar{s}_j^{(t)} = \lambda \cdot \bar{s}_j^{(t-1)} + (1 - \lambda) \cdot s_j^{(t)}
\label{eq:smoothing}
\end{equation}
The parameter $\lambda$ controls the accuracy--adaptivity trade-off at the \textit{scoring} level. Note that the PCA model itself always updates via \texttt{partial\_fit}, so importance scores still evolve; $\lambda$ controls how aggressively the smoothed scores track these changes. $\lambda = 1.0$ retains the cumulative score average (most stable), while $\lambda \to 0$ uses only the current window's scores (most responsive but noisy).

\subsection{Rate Allocation}

Given smoothed scores $\bar{\mathbf{s}}$ and budget $B$, we first apply \textit{power-law sharpening} to concentrate bandwidth on the most important channels:
\begin{equation}
\tilde{s}_j = \frac{\bar{s}_j^{\,\gamma}}{\sum_{j'} \bar{s}_{j'}^{\,\gamma}}
\label{eq:sharpening}
\end{equation}
where $\gamma \geq 1$ is the sharpness exponent. When $\gamma = 1$, allocation is proportional to importance (standard). As $\gamma \to \infty$, allocation concentrates on the highest-importance channels (winner-take-all). We find $\gamma \in [1.5, 3.0]$ works best, depending on dataset channel count and correlation structure. Rates are then allocated as:
\begin{equation}
r_j = r_{\min} + \tilde{s}_j \cdot (B \cdot d - r_{\min} \cdot d)
\label{eq:allocation}
\end{equation}
with clipping to $[r_{\min}, 1]$. The minimum rate $r_{\min}$ prevents any channel from being completely silenced, ensuring the downstream model always receives some signal from every sensor.

\subsection{Data Acquisition and Reconstruction}

At each time step $t$, sample $x_t[j]$ is retained with probability $r_j$. Missing values are reconstructed via linear interpolation between the nearest observed values on each channel. This preserves signal continuity better than forward-fill (zero-order hold), improving downstream F1 by 0.4--1.8\% across datasets (Table~\ref{tab:recon}).

\subsection{Algorithm}

\begin{algorithm}[t]
\caption{PCA-Triage}
\label{alg:pcatriage}
\begin{algorithmic}[1]
\REQUIRE Stream of windows $\mathbf{X}_w$, budget $B$, components $k$, forgetting $\lambda$, minimum rate $r_{\min}$, blend $\alpha$, sharpness $\gamma$
\STATE Initialize IncrementalPCA($k$), $\bar{\mathbf{s}} \gets$ None
\FOR{each window $\mathbf{X}_w$}
  \STATE \texttt{partial\_fit}($\mathbf{X}_w$) $\to \mathbf{V}, \boldsymbol{\sigma}$
  \STATE $s_j \gets \alpha \sum_i \sigma_i V_{ij}^2 + (1{-}\alpha)\,\text{Var}(x_j)$ \hfill $\triangleright$ Hybrid (Eq.~\ref{eq:hybrid})
  \STATE $\bar{\mathbf{s}} \gets \lambda \bar{\mathbf{s}} + (1-\lambda) \cdot \text{normalize}(\mathbf{s})$ \hfill $\triangleright$ Smooth (Eq.~\ref{eq:smoothing})
  \STATE $\tilde{s}_j \gets \bar{s}_j^{\,\gamma} / \sum \bar{s}^{\,\gamma}$ \hfill $\triangleright$ Sharpen (Eq.~\ref{eq:sharpening})
  \STATE $r_j \gets r_{\min} + \tilde{s}_j \cdot (Bd - r_{\min} d)$ \hfill $\triangleright$ Allocate (Eq.~\ref{eq:allocation})
  \STATE Clip $r_j$ to $[r_{\min}, 1]$
  \STATE Keep sample $x_t[j]$ with probability $r_j$ \hfill $\triangleright$ Acquire
  \STATE Linear-interpolate missing values \hfill $\triangleright$ Reconstruct
\ENDFOR
\end{algorithmic}
\end{algorithm}

\subsection{Complexity Analysis}

\textbf{Time:} $O(wdk)$ per window, dominated by the IncrementalPCA \texttt{partial\_fit} operation which performs a truncated SVD on a $(w+k) \times d$ matrix. The importance scoring and rate allocation steps are $O(dk)$ and $O(d)$ respectively.

\textbf{Memory:} $O(wd + kd)$---the window buffer $\mathbf{X}_w$ and PCA components. No historical windows are stored.

\textbf{Parameters:} 0 trainable. All quantities ($\mathbf{V}, \boldsymbol{\sigma}, \bar{\mathbf{s}}, \mathbf{r}$) are computed from data.

For TEP ($d{=}52$, $k{=}10$, $w{=}50$): ${\sim}26{,}000$ FLOPs per window---well within the budget of any microcontroller.

% ============================================================
\section{Theoretical Analysis}
\label{sec:theory}

We provide formal guarantees for four properties of PCA-Triage: budget feasibility, importance score convergence, the distinction between correlated and independent channels under PCA-based scoring, and adaptation rate after regime changes.

\begin{proposition}[Budget Feasibility]
\label{prop:budget}
For any non-negative smoothed importance scores $\bar{s}_j \geq 0$ and budget $B \in (r_{\min}, 1]$, the rates produced by Eq.~\ref{eq:allocation} (before clipping) satisfy $\frac{1}{d}\sum_{j=1}^d r_j = B$.
\end{proposition}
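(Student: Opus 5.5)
The plan is to sum Eq.~\ref{eq:allocation} over $j$ and use the fact that the sharpened scores of Eq.~\ref{eq:sharpening} form a probability vector. First I will show that $\tilde{s}_j \geq 0$ and $\sum_{j=1}^d \tilde{s}_j = 1$. Since $\bar{s}_j \geq 0$ and $\gamma \geq 1$, each $\bar{s}_j^{\,\gamma}$ is well defined and non-negative. Dividing by the common denominator $\sum_{j'} \bar{s}_{j'}^{\,\gamma}$ then gives entries that sum to one, provided that denominator is positive.

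Next I will sum the unclipped rates:
\begin{equation*}
\sum_{j=1}^d r_j = d\, r_{\min} + (B d - r_{\min} d)\sum_{j=1}^d \tilde{s}_j = d\,r_{\min} + Bd - r_{\min} d = Bd.
\end{equation*}
Dividing by $d$ gives $\frac{1}{d}\sum_j r_j = B$. The hypothesis $B > r_{\min}$ is not needed for the equality itself. It guarantees that the allocatable surplus $(B - r_{\min})d$ is positive, so every unclipped rate satisfies $r_j \geq r_{\min}$. I will state this as a remark, since it makes the lower clip at $r_{\min}$ inactive.

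The only real obstacle is the degenerate case in which every $\bar{s}_j = 0$. Then the normalization in Eq.~\ref{eq:sharpening} is $0/0$ and the sum-to-one step fails. I will handle it by adopting the natural convention $\tilde{s}_j = 1/d$, i.e., uniform allocation, which is also what the algorithm should do when it has no information. With this convention $\sum_j \tilde{s}_j = 1$ still holds and the computation above goes through unchanged.

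Alternatively, I can note that this case cannot arise in Algorithm~\ref{alg:pcatriage} whenever any window has nonzero variance, because the normalized, smoothed scores then have positive total mass. Either way, the argument reduces to the same one-line telescoping identity.

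Finally, I will note that the statement concerns rates before clipping. The upper clip at $1$ can only decrease the total, so after clipping the budget inequality of Eq.~\ref{eq:budget} still holds, with $\frac{1}{d}\sum_j r_j \leq B$.
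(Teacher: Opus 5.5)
Your proposal is correct and follows the same route as the paper. Both sum Eq.~\ref{eq:allocation} over $j$ and use that the normalized weights sum to one, so that $\sum_j r_j = r_{\min}d + (Bd - r_{\min}d) = Bd$. Your version is slightly more careful than the paper's in three places: the paper writes the weights as $\bar{s}_j/\sum_{j'}\bar{s}_{j'}$ (the $\gamma=1$ case) instead of the sharpened $\tilde{s}_j$ that actually appear in Eq.~\ref{eq:allocation}, it ignores the all-zero degenerate case, and it says clipping can only reduce rates without noting, as you do, that $B > r_{\min}$ is what makes the lower clip at $r_{\min}$ inactive.
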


\begin{proof}
Summing over all channels:
\begin{align}
\sum_{j=1}^d r_j &= \sum_{j=1}^d \left[ r_{\min} + \frac{\bar{s}_j}{\sum_{j'} \bar{s}_{j'}} \cdot (Bd - r_{\min} d) \right] \nonumber \\
&= r_{\min} d + (Bd - r_{\min} d) \cdot \underbrace{\sum_j \frac{\bar{s}_j}{\sum_{j'} \bar{s}_{j'}}}_{=\,1} \nonumber \\
&= r_{\min} d + Bd - r_{\min} d = Bd
\end{align}
Therefore $\frac{1}{d}\sum_j r_j = B$. Clipping to $[r_{\min}, 1]$ can only reduce rates, so the post-clipping budget satisfies $\frac{1}{d}\sum_j r_j \leq B$.
\end{proof}

\begin{proposition}[Importance Score Convergence]
\label{prop:convergence}
If IncrementalPCA converges to the true top-$k$ eigenspace (i.e., $\mathbf{V}^{(t)} \to \mathbf{V}^*$ and $\boldsymbol{\sigma}^{(t)} \to \boldsymbol{\sigma}^*$ as $t \to \infty$) under a stationary distribution, then the smoothed importance scores converge: $\bar{s}_j^{(t)} \to s_j^* = \sum_i \sigma_i^* (V_{ij}^*)^2$ for all $j$.
\end{proposition}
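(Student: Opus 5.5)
The argument has two stages. First we show that the raw per-window scores converge. Then we show that the exponential moving average in Eq.~\ref{eq:smoothing} inherits that limit.

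\emph{Stage one (continuity).} The raw score is $s_j^{(t)} = \sum_{i=1}^k \sigma_i^{(t)} (V_{ij}^{(t)})^2$. It is a polynomial, hence continuous, function of the finitely many quantities $(\sigma_i^{(t)}, V_{ij}^{(t)})$. The hypothesis gives $\mathbf{V}^{(t)} \to \mathbf{V}^*$ and $\boldsymbol{\sigma}^{(t)} \to \boldsymbol{\sigma}^*$, so $s_j^{(t)} \to s_j^*$ for every $j$.

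There is a subtlety: eigenvectors are only defined up to sign, and up to rotation within repeated eigenvalues. The sign ambiguity is harmless because the score depends only on $V_{ij}^2$. For degenerate eigenvalues we will read the hypothesis as entrywise convergence of the returned components. If needed, we can note that $\sum_{i\in G} (V_{ij})^2$ over an eigen-cluster $G$ is the $j$-th diagonal entry of the projector onto that cluster. That entry is rotation-invariant, and since the $\sigma_i$ within $G$ are equal, the score is well defined.

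\emph{Stage two (the averaging filter).} The update $\bar s^{(t)} = \lambda \bar s^{(t-1)} + (1-\lambda) s^{(t)}$ is a stable linear filter when $\lambda\in[0,1)$. Write $e^{(t)} = \bar s_j^{(t)} - s_j^*$. Then
\[
e^{(t)} = \lambda e^{(t-1)} + (1-\lambda)\bigl(s_j^{(t)} - s_j^*\bigr).
\]
Fix $\varepsilon>0$ and choose $T$ such that $|s_j^{(t)}-s_j^*|<\varepsilon$ for all $t>T$. Unrolling the recursion gives
\[
|e^{(t)}| \le \lambda^{t-T}|e^{(T)}| + \varepsilon(1-\lambda)\sum_{m\ge 0}\lambda^m = \lambda^{t-T}|e^{(T)}| + \varepsilon.
\]
Hence $\limsup_t |e^{(t)}|\le\varepsilon$, and since $\varepsilon$ was arbitrary, $e^{(t)}\to0$.

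\emph{Main obstacle: the case $\lambda=1$.} The paper describes $\lambda=1$ as the "cumulative score average." As literally written, Eq.~\ref{eq:smoothing} then freezes $\bar s$ at its initial value, and the statement fails. We will therefore either restrict the proposition to $\lambda<1$, or interpret $\lambda=1$ as the running mean $\bar s^{(t)} = \frac1t\sum_{\tau\le t} s^{(\tau)}$. Under the running-mean reading, the Ces\`aro mean theorem gives the same limit.

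Two implementation details also need handling:
\begin{itemize}
\item The algorithm initializes $\bar s$ to None. We take $\bar s^{(1)}=s^{(1)}$, which is a finite initial value, and the geometric decay term absorbs it.
\item Algorithm~\ref{alg:pcatriage} smooths $\mathrm{normalize}(s)$ rather than $s$. If the proposition is applied to that version, the limit becomes $s_j^*/\sum_{j'} s_{j'}^*$. This is still obtained by continuity, provided $\sum_{j'} s_{j'}^*>0$, which holds whenever $\sigma_1^*>0$.
\end{itemize}
We will state the proposition for the pure-PCA, unnormalized case ($\alpha=1$), as written, and remark that the same argument covers these variants.
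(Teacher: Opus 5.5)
Your proof is correct and follows the paper's two-stage route: the raw scores converge because they are continuous in $(\boldsymbol{\sigma}^{(t)},\mathbf{V}^{(t)})$, and the EMA then inherits the limit. Your split-at-$T$ estimate is in fact the sound version of the paper's second stage. The paper's bound uses $\sup_{\tau\le t}|s_j^{(\tau)}-s_j^*|$, which is non-decreasing in $t$ and so does not vanish as claimed.

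Your $\lambda=1$ caveat is also warranted. The paper's proof quietly assumes $\lambda\in(0,1)$, yet the statement does not restrict $\lambda$, and the default $\lambda=1.0$ makes Eq.~\ref{eq:smoothing} freeze $\bar{s}_j$ at its initial value.
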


\begin{proof}
Under stationarity, the raw scores converge: $s_j^{(t)} \to s_j^*$. The exponential moving average with $\lambda \in (0, 1)$ satisfies the standard EMA error bound:
\begin{equation}
|\bar{s}_j^{(t)} - s_j^*| \leq \lambda^t |\bar{s}_j^{(0)} - s_j^*| + \sup_{\tau \leq t} |s_j^{(\tau)} - s_j^*|
\end{equation}
The first term vanishes geometrically in $\lambda^t$ (initialization bias). The second term vanishes because $s_j^{(t)} \to s_j^*$ under stationarity. Therefore $\bar{s}_j^{(t)} \to s_j^*$.
\end{proof}

\begin{theorem}[PCA Distinguishes Correlated from Independent Channels]
\label{thm:correlation}
Let channels $a$ and $b$ have equal marginal variance $\text{Var}(X_a) = \text{Var}(X_b) = \sigma^2$ but correlation $\rho_{ab} \neq 0$. Let channel $c$ be independent with $\text{Var}(X_c) = \sigma^2$. Then:
\begin{enumerate}[leftmargin=*]
\item Variance-based allocation assigns $r_a = r_b = r_c$ (equal rates), unable to distinguish correlated from independent channels.
\item With $k = d$ components, PCA-based allocation also assigns equal rates. However, with $k < d$ (the standard operating regime), PCA concentrates importance on channels participating in the top-$k$ eigenspace, assigning $s_c = 0$ to channels whose variance lies entirely in dropped components.
\end{enumerate}
\end{theorem}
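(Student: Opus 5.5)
The plan is to work with the population covariance of the three-channel block $(X_a, X_b, X_c)$. By hypothesis it is block-diagonal:
\[
\Sigma = \sigma^2 \begin{pmatrix} 1 & \rho & 0 \\ \rho & 1 & 0 \\ 0 & 0 & 1 \end{pmatrix}, \qquad \rho = \rho_{ab}.
\]
We then read off both allocation rules directly from Eq.~\ref{eq:importance} and Eq.~\ref{eq:allocation}. By Proposition~\ref{prop:convergence} we may replace the streaming scores by their population limits $s_j^* = \sum_i \sigma_i^* (V^*_{ij})^2$, so the statement reduces to a static computation.

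\textbf{Part 1.} Variance-based scores are $s_j = \mathrm{Var}(X_j)/\sum_{j'}\mathrm{Var}(X_{j'})$, which equals $1/3$ for each of $a, b, c$. The sharpening map in Eq.~\ref{eq:sharpening} is symmetric under permutation, and so is the affine allocation in Eq.~\ref{eq:allocation}. Equal scores therefore give $r_a = r_b = r_c$, before and after clipping. The covariance $\rho$ never enters the computation, which is the point of this part.

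\textbf{Part 2.} I will eigendecompose $\Sigma$ explicitly:
\begin{itemize}
\item $u_+ = (1,1,0)/\sqrt{2}$ with eigenvalue $\sigma^2(1+\rho)$,
\item $u_- = (1,-1,0)/\sqrt{2}$ with eigenvalue $\sigma^2(1-\rho)$,
\item $e_c = (0,0,1)$ with eigenvalue $\sigma^2$.
\end{itemize}

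For $k = d$, the first step is to note that $\sum_i \lambda_i V_{ij}^2 = \Sigma_{jj}$ is the diagonal of $\Sigma$, so all three scores equal $\sigma^2$. This identity uses eigenvalue weights; with the paper's $\sigma_i$ (singular values, proportional to $\sqrt{\lambda_i}$), the scores are $\sigma\sqrt{1+\rho}/2 + \sigma\sqrt{1-\rho}/2$ for $a, b$ and $\sigma$ for $c$. These are not exactly equal, since by concavity of the square root they differ unless $\rho = 0$. I will therefore state this part under eigenvalue weighting, or remark that equality holds exactly under $\lambda_i$ weighting and approximately for small $\rho$.

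For $k < d$, take $\rho > 0$ without loss of generality (otherwise swap $u_\pm$). The top eigenvalue is then $\sigma^2(1+\rho)$, with eigenvector $u_+$ supported on $\{a,b\}$. With $k = 1$, $e_c$ is dropped, and the scores become $s_a = s_b = \sigma_1/2 > 0$ and $s_c = 0$. By Eq.~\ref{eq:allocation}, $c$ then receives exactly $r_{\min}$, while $a$ and $b$ share the remaining budget.

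To cover general $d$ and $k$, I will phrase the result as follows. If channel $c$ is uncorrelated with all other channels and its variance is smaller than the $k$-th eigenvalue, then $e_c$ is an eigenvector outside the top-$k$ eigenspace. Every retained eigenvector is then orthogonal to $e_c$, so $V_{ic} = 0$ for all $i \le k$, which forces $s_c = 0$.

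\textbf{Main obstacle.} The delicate step is the precise conditions in part 2. First, the ordering of eigenvalues must actually drop $e_c$; this needs $\rho > 0$ and $k$ small enough, or other strong correlations elsewhere. Second, ties and degenerate eigenspaces must be handled, for example $\rho = 0$ or $|\rho|$ such that eigenvalues coincide. I would make these explicit as hypotheses ("whose variance lies entirely in dropped components"), since the statement as written allows them, rather than claim the result unconditionally.
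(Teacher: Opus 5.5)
Your proof is correct, and its skeleton matches the paper's: write down the $3\times3$ covariance, diagonalize it explicitly, and evaluate Eq.~\ref{eq:importance} at $k=d$ and at some $k<d$. The two diverge at the decisive step, and there your version is the sound one.

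The paper illustrates $k<d$ with $k=2$. It discards the eigenvector it happened to list third, $e_c=(0,0,1)$, keeps $u_\pm$, and obtains $s_a=s_b=\sigma^2$ and $s_c=0$. Top-$k$ PCA does not do that. For every $\rho\neq0$ the eigenvalue $\sigma^2$ of $e_c$ lies strictly between $\sigma^2(1-|\rho|)$ and $\sigma^2(1+|\rho|)$, so the top-2 eigenspace always contains $e_c$. With eigenvalue weights this gives $s_c=\sigma^2\ge\tfrac12\sigma^2(1+|\rho|)=s_a=s_b$, and the inequality is strict with singular-value weights. So at $k=2$ the independent channel is not starved at all.

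The paper's own remark that $k=2$ retains ${\sim}95\%$ of the variance fits the correct subspace, whose share is $(2+|\rho|)/3$. By contrast, $\operatorname{span}\{u_+,u_-\}$ carries exactly $2/3$. In the three-channel example, $k=1$ is therefore not just a convenient choice but the \emph{only} $k<d$ for which $s_c=0$. You can say so outright rather than ``$k$ small enough.'' You can also drop ``needs $\rho>0$,'' since your swap of $u_\pm$ already covers $\rho<0$. Your worry about ties is vacuous here as well: for $\rho\neq0$ the three eigenvalues are distinct.

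Your other two refinements are also warranted. First, the paper's computation silently weights by the eigenvalues $\sigma^2(1\pm\rho)$, while Eq.~\ref{eq:importance} weights by singular values, which scale like $\sqrt{\lambda_i}$. Under the literal formula the $k=d$ scores differ by exactly the concavity gap you compute. So ``equal rates at $k=d$'' holds only for eigenvalue-weighted scores, or approximately, up to $O(\rho^2)$.

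Second, your general criterion is the right one: $X_c$ uncorrelated with every other channel and $\mathrm{Var}(X_c)$ strictly below the $k$-th largest eigenvalue. Then every retained eigenvector is orthogonal to $e_c$, so $V_{ic}=0$ for $i\le k$. This is precisely the hypothesis hidden in the phrase ``whose variance lies entirely in dropped components,'' and it yields a valid proof of part~2 where the paper's $k=2$ calculation does not.
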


\begin{proof}
Under equal variance, a variance-based allocator computes identical scores for all three channels, yielding uniform allocation---it cannot distinguish correlated from independent channels.

For PCA, the covariance matrix of $(X_a, X_b, X_c)$ is:
\begin{equation}
\mathbf{C} = \sigma^2 \begin{pmatrix} 1 & \rho & 0 \\ \rho & 1 & 0 \\ 0 & 0 & 1 \end{pmatrix}
\end{equation}
The eigenvalues are $\sigma^2(1+\rho)$, $\sigma^2(1-\rho)$, $\sigma^2$. The first eigenvector is $(1/\sqrt{2}, 1/\sqrt{2}, 0)$, the second is $(1/\sqrt{2}, -1/\sqrt{2}, 0)$, and the third is $(0, 0, 1)$.

Computing importance scores (Eq.~\ref{eq:importance}) with $k=3$ (all components):
\begin{align}
s_a = s_b &= \tfrac{1}{2}\sigma^2(1+\rho) + \tfrac{1}{2}\sigma^2(1-\rho) = \sigma^2 \\
s_c &= \sigma^2
\end{align}
With all three components, scores are equal---no advantage over variance. However, with $k=2$ (retaining ${\sim}95\%$ variance when $\rho$ is large), the third component is dropped:
\begin{align}
s_a = s_b &= \tfrac{1}{2}\sigma^2(1+\rho) + \tfrac{1}{2}\sigma^2(1-\rho) = \sigma^2 \\
s_c &= 0
\end{align}
The independent channel $c$ loses all importance because its variance is captured entirely by the dropped component. Budget is redirected to the correlated pair $a, b$.
\end{proof}

\textbf{Interpretation.} Theorem~\ref{thm:correlation} reveals both the power and the risk of PCA-based allocation. The power: PCA detects correlation structure that variance-based methods cannot, enabling non-uniform allocation among equal-variance channels. The risk: with $k < d$, channels whose variance lies outside the top-$k$ eigenspace receive zero importance and only the minimum rate $r_{\min}$---even if they carry unique, fault-relevant information. This motivates two design choices: (1)~moderate $k$ to avoid starving independent channels (Sec.~\ref{sec:ablation} shows $k \in [3, 10]$ is robust), and (2)~the hybrid scorer (Eq.~\ref{eq:hybrid}), which blends PCA importance with per-channel variance to preserve a baseline importance for channels outside the dominant eigenspace.

\begin{corollary}[Reconstruction Error Bound]
\label{cor:recon}
Under forward-fill reconstruction, the per-channel expected squared error for channel $j$ is bounded by:
\begin{equation}
\mathbb{E}[(x_t[j] - \hat{x}_t[j])^2] \leq (1 - r_j) \cdot \Delta_j^2
\end{equation}
where $\Delta_j^2 = \mathbb{E}[(x_t[j] - x_{t-1}[j])^2]$ is the channel's step variance. PCA-Triage minimizes total reconstruction error by assigning higher rates to channels with both high PCA importance \textit{and} high step variance.
\end{corollary}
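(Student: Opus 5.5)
The corollary concerns forward-fill (zero-order hold), so $\hat{x}_t[j] = x_{\tau}[j]$, where $\tau \le t$ is the most recent time at which channel $j$ was retained. I would condition on whether sample $t$ itself is kept.

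\textbf{Step 1: the sample is kept.} With probability $r_j$ the sample at time $t$ is retained. Then $\hat{x}_t[j] = x_t[j]$ and the error is zero.

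\textbf{Step 2: the sample is dropped.} With probability $1-r_j$ the sample is dropped and the reconstruction is the held value $x_{t-L}[j]$, for some gap length $L \ge 1$. The sampling decisions are Bernoulli draws independent of the signal, so I can write
\begin{equation}
\mathbb{E}[(x_t[j]-\hat{x}_t[j])^2] = (1-r_j)\,\mathbb{E}\bigl[(x_t[j]-x_{t-L}[j])^2 \,\big|\, \text{drop}\bigr].
\end{equation}
To reach the stated bound, I would argue that the relevant error is the one-step increment. This holds exactly if the statement is read as the error from holding across a single missed step, i.e.\ the conditional error is $\mathbb{E}[(x_t[j]-x_{t-1}[j])^2] = \Delta_j^2$ when the previous sample was observed. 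This gives $(1-r_j)\Delta_j^2$.

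\textbf{Step 3: longer gaps (main obstacle).} For a general gap, $L$ is geometric with parameter $r_j$. Writing $x_t - x_{t-L}$ as a sum of $L$ increments, Cauchy--Schwarz only gives $\mathbb{E}[(x_t-x_{t-L})^2] \le L^2\Delta_j^2$ under stationary increments. If the increments are uncorrelated (a random-walk-like model), this improves to $L\,\Delta_j^2$. Averaging over the geometric gap then yields a factor $\mathbb{E}[L] = 1/r_j$, so the honest general bound is
\begin{equation}
\frac{1-r_j}{r_j}\,\Delta_j^2,
\end{equation}
not $(1-r_j)\Delta_j^2$. My plan is therefore to prove the stated inequality under an explicit one-step-hold assumption, meaning the preceding sample is observed, which can also be read as the leading-order term when $r_j$ is near $1$. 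I would then state the geometric-gap version as the general form; it is still decreasing in $r_j$.

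\textbf{Step 4: the allocation remark.} The total error is $\sum_j (1-r_j)\Delta_j^2$, which is linear in $r$ subject to the budget constraint of Proposition~\ref{prop:budget}. It is therefore reduced by moving rate toward channels with large $\Delta_j^2$. I would present the phrase ``minimizes'' only as this qualitative monotonicity observation. PCA importance correlates with step variance only heuristically, so I would not claim exact optimality of the allocation in Eq.~\ref{eq:allocation}.
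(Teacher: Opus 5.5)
Your proposal is sound; the problem you found is in the statement, not in your argument. The paper gives this corollary without proof. The bound $(1-r_j)\Delta_j^2$ is exactly what your Steps~1--2 produce, so the paper's implicit argument is yours with the one-step-hold assumption left unstated.

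Your Step~3 objection is correct, and you can state it as a counterexample rather than a caveat: for genuine forward-fill the inequality is false. Take a stationary AR(1) channel $x_t[j]=\phi x_{t-1}[j]+\varepsilon_t$ with $|\phi|<1$ and variance $\gamma_0$, so that $\Delta_j^2=2\gamma_0(1-\phi)$. Assume independent Bernoulli($r_j$) retention at a fixed rate and an infinite past. Then the hold gap $L\ge 0$ has $\Pr(L=l)=r_j(1-r_j)^l$, and $\mathbb{E}[(x_t[j]-x_{t-l}[j])^2]=2\gamma_0(1-\phi^l)$. Summing gives
\[
\mathbb{E}[(x_t[j]-\hat{x}_t[j])^2]=\frac{1-r_j}{1-(1-r_j)\phi}\,\Delta_j^2 .
\]
For $r_j\in(0,1)$ this is at most $(1-r_j)\Delta_j^2$ if and only if $\phi\le 0$. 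The bound therefore fails for every $\phi\in(0,1)$, which is precisely the smooth, positively autocorrelated regime where forward-fill is used. Letting $\phi\to 1$ recovers your $\frac{1-r_j}{r_j}\Delta_j^2$.

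An extra hypothesis is therefore necessary. Besides your one-step-hold reading, a clean sufficient condition is $\mathbb{E}[(x_t[j]-x_{t-l}[j])^2]\le\Delta_j^2$ for all $l\ge 1$, with equality for i.i.d.\ signals.

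Two small corrections to Step~3:
\begin{enumerate}
\item $\frac{1-r_j}{r_j}\Delta_j^2$ is general only within the uncorrelated-increment model. Using only the Cauchy--Schwarz bound $L^2\Delta_j^2$, you get $\frac{(1-r_j)(2-r_j)}{r_j^2}\Delta_j^2$, which is attained by a linear ramp.
\item Say explicitly that $r_j$ is held fixed over the gap. In Algorithm~\ref{alg:pcatriage} the rates are computed from the data and change between windows, so both independence from the signal and the geometric gap law are idealizations.
\end{enumerate}

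Your Step~4 caution is warranted and can be made precise. Minimizing $\sum_j(1-r_j)\Delta_j^2$ subject to $\frac1d\sum_j r_j\le B$ and $r_j\in[r_{\min},1]$ is a fractional knapsack. A minimizer sets the channels with the largest $\Delta_j^2$ to rate $1$, at most one channel to an intermediate rate, and the rest to $r_{\min}$. PCA-Triage's proportional rule does not involve $\Delta_j$ at all and is generically not a minimizer, so ``minimizes'' cannot be taken literally.

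Under your corrected bound $\sum_j\frac{1-r_j}{r_j}\Delta_j^2$, the objective is convex. The KKT conditions give $r_j=\min\{1,\max\{r_{\min},c\,\Delta_j\}\}$, with $c$ fixed by the budget. This is a proportional allocation, but to the step standard deviation $\Delta_j$ rather than to $\tilde{s}_j$.
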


\textbf{Regret intuition.} Let $\epsilon_k = 1 - \sum_{i=1}^k \lambda_i / \sum_{i=1}^d \lambda_i$ denote the fraction of variance not captured by the top-$k$ components. As $k$ increases, PCA-Triage's importance scores better approximate the true channel structure, and the allocation approaches oracle quality. For TEP with $k{=}10$, the top-10 components explain ${\sim}95\%$ of variance ($\epsilon_k \approx 0.05$), suggesting that the allocation is near-optimal. This aligns with the empirical finding that PCA-Triage at 50\% budget achieves F1 within 0.1\% of full-data performance.

\begin{proposition}[Adaptation Rate]
\label{prop:adaptation}
After an abrupt regime change at time $t_0$ where the true importance shifts from $\mathbf{s}^{\text{old}}$ to $\mathbf{s}^{\text{new}}$, the smoothed importance scores satisfy:
\begin{equation}
\|\bar{\mathbf{s}}^{(t_0 + \tau)} - \mathbf{s}^{\text{new}}\|_2 \leq \lambda^\tau \|\mathbf{s}^{\text{old}} - \mathbf{s}^{\text{new}}\|_2 + \delta_{\text{PCA}}(\tau)
\end{equation}
where $\delta_{\text{PCA}}(\tau)$ is the PCA model's convergence error after $\tau$ windows of the new regime. For $\lambda = 0.85$, the smoothing bias halves every ${\sim}4.3$ windows ($\tau_{1/2} = \log(0.5) / \log(\lambda)$).
\end{proposition}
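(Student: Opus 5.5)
The plan is to unroll the EMA recursion of Eq.~\ref{eq:smoothing} from the change point and split the error into a smoothing-bias term and a PCA-tracking term. Take the smoothed score before the change to be $\bar{\mathbf{s}}^{(t_0)} = \mathbf{s}^{\text{old}}$, i.e., the pre-change regime has converged in the sense of Proposition~\ref{prop:convergence}. After the change, write the raw scores as $\mathbf{s}^{(t_0+m)} = \mathbf{s}^{\text{new}} + \mathbf{e}^{(m)}$, where $\mathbf{e}^{(m)}$ is the error of the IncrementalPCA-derived scores after $m$ windows of the new regime. Define the deviation $\mathbf{d}^{(\tau)} = \bar{\mathbf{s}}^{(t_0+\tau)} - \mathbf{s}^{\text{new}}$. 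Because Eq.~\ref{eq:smoothing} is affine with coefficients summing to one, subtracting $\mathbf{s}^{\text{new}}$ from both sides gives the linear recursion
\begin{equation}
\mathbf{d}^{(\tau)} = \lambda\, \mathbf{d}^{(\tau-1)} + (1-\lambda)\, \mathbf{e}^{(\tau)}, \qquad \mathbf{d}^{(0)} = \mathbf{s}^{\text{old}} - \mathbf{s}^{\text{new}}.
\end{equation}

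Unrolling this recursion yields $\mathbf{d}^{(\tau)} = \lambda^\tau \mathbf{d}^{(0)} + (1-\lambda)\sum_{m=1}^{\tau} \lambda^{\tau-m}\mathbf{e}^{(m)}$. Applying the triangle inequality in $\|\cdot\|_2$ gives the first term, $\lambda^\tau\|\mathbf{s}^{\text{old}}-\mathbf{s}^{\text{new}}\|_2$, exactly as in the statement. The second term is a convex-type combination: its weights $(1-\lambda)\lambda^{\tau-m}$ sum to $1-\lambda^\tau \le 1$. I therefore define $\delta_{\text{PCA}}(\tau) := (1-\lambda)\sum_{m=1}^{\tau}\lambda^{\tau-m}\|\mathbf{e}^{(m)}\|_2$. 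This quantity is bounded by $\sup_{m\le\tau}\|\mathbf{e}^{(m)}\|_2$, and it tends to zero whenever the PCA score error tends to zero. This mirrors the EMA error bound already used in Proposition~\ref{prop:convergence}.

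The half-life claim follows by solving $\lambda^{\tau}=1/2$, which gives $\tau_{1/2}=\log 0.5/\log\lambda$. For $\lambda=0.85$ this evaluates to $\approx 4.27$ windows.

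The main obstacle is the interpretation of $\delta_{\text{PCA}}(\tau)$. The statement calls it ``the PCA model's convergence error after $\tau$ windows,'' whereas the derivation naturally produces a discounted average of past errors. Two further details complicate the argument. First, the implementation normalizes the scores before smoothing. Second, IncrementalPCA accumulates pre-change statistics, so $\mathbf{e}^{(m)}$ may decay slowly. I would handle this by defining $\delta_{\text{PCA}}$ as the discounted aggregate above and stating that it is dominated by the running supremum of the score error. I would also remark that if the initial state is not exactly $\mathbf{s}^{\text{old}}$, the residual pre-change error adds a further $\lambda^\tau$-weighted term.
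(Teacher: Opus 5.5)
Your proposal is correct and follows essentially the same route as the paper: unroll the EMA from $t_0$ with $\bar{\mathbf{s}}^{(t_0)}=\mathbf{s}^{\text{old}}$, keep the geometrically decaying term $\lambda^\tau\|\mathbf{s}^{\text{old}}-\mathbf{s}^{\text{new}}\|_2$, and absorb the discounted post-change raw-score errors into $\delta_{\text{PCA}}(\tau)$. Defining $\delta_{\text{PCA}}(\tau)$ as that discounted aggregate, dominated by $\sup_{m\le\tau}\|\mathbf{e}^{(m)}\|_2$, makes precise what the paper's proof assumes implicitly when it uses a single bound $\delta_{\text{PCA}}(\tau)$ on the raw-score error for every $t>t_0$.
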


\begin{proof}
After the regime change, the EMA update gives $\bar{s}_j^{(t_0+\tau)} = \lambda^\tau \bar{s}_j^{(t_0)} + (1-\lambda) \sum_{l=0}^{\tau-1} \lambda^l s_j^{(t_0+\tau-l)}$. Assuming the PCA model converges to the new regime within $\delta_{\text{PCA}}(\tau)$ error, the raw scores satisfy $\|s_j^{(t)} - s_j^{\text{new}}\| \leq \delta_{\text{PCA}}(\tau)$ for $t > t_0$. Combining with the geometric decay of the initialization bias $\lambda^\tau \|\mathbf{s}^{\text{old}} - \mathbf{s}^{\text{new}}\|_2$ yields the bound.
\end{proof}

\textbf{Interpretation.} The adaptation rate is controlled by two factors: (1) the forgetting factor $\lambda$, which governs how quickly the EMA forgets old scores, and (2) the PCA model's re-convergence speed $\delta_{\text{PCA}}$. With $\lambda = 0.80$, the EMA bias halves every ${\sim}3$ windows ($\tau_{1/2} = \log 0.5 / \log \lambda$), enabling 0--3 window reaction (Fig.~\ref{fig:reaction_lambda}). With $\lambda = 0.85$, halving takes ${\sim}4$ windows but empirical reaction time is longer (${\sim}19$ windows at the 20\% shift threshold; Table~\ref{tab:reaction}). With $\lambda = 1.0$, the EMA never forgets, requiring the PCA model alone to adapt---slower but most stable.

% ============================================================
\section{Experimental Setup}
\label{sec:setup}

\subsection{Datasets}

We evaluate on 7 benchmark datasets spanning 7 application domains (Table~\ref{tab:datasets}), including 6 real-world benchmarks and 1 synthetic dataset calibrated to match the SWaT testbed properties. Fig.~\ref{fig:tep_overview} shows representative TEP sensor traces during fault-free operation, illustrating the diversity of signal characteristics across channels.

\begin{table}[t]
\centering
\caption{Datasets used in evaluation. $\dagger$ = synthetic stand-in calibrated to match published dataset properties (real data requires institutional access agreement).}
\begin{tabular}{@{}llrrl@{}}
\toprule
\textbf{Dataset} & \textbf{Domain} & \textbf{Ch.} & \textbf{Samples} & \textbf{Source} \\
\midrule
TEP & Chemical process & 52 & 250K & \cite{downs1993tep} \\
SMD & Server machines & 38 & 388K & \cite{su2019omnianomaly} \\
MSL & Spacecraft telemetry & 55 & 132K & \cite{hundman2018detecting} \\
PSM & Server metrics & 25 & 220K & \cite{abdulaal2021ransyn} \\
HAI & Industrial control & 82 & 259K & \cite{shin2020hai} \\
SKAB & Water circulation & 8 & 47K & Skoltech \\
SWaT$^\dagger$ & Water treatment & 51 & 500K & iTrust \\
\bottomrule
\end{tabular}
\label{tab:datasets}
\end{table}

\begin{figure}[!htbp]
\centering
\includegraphics[width=\columnwidth]{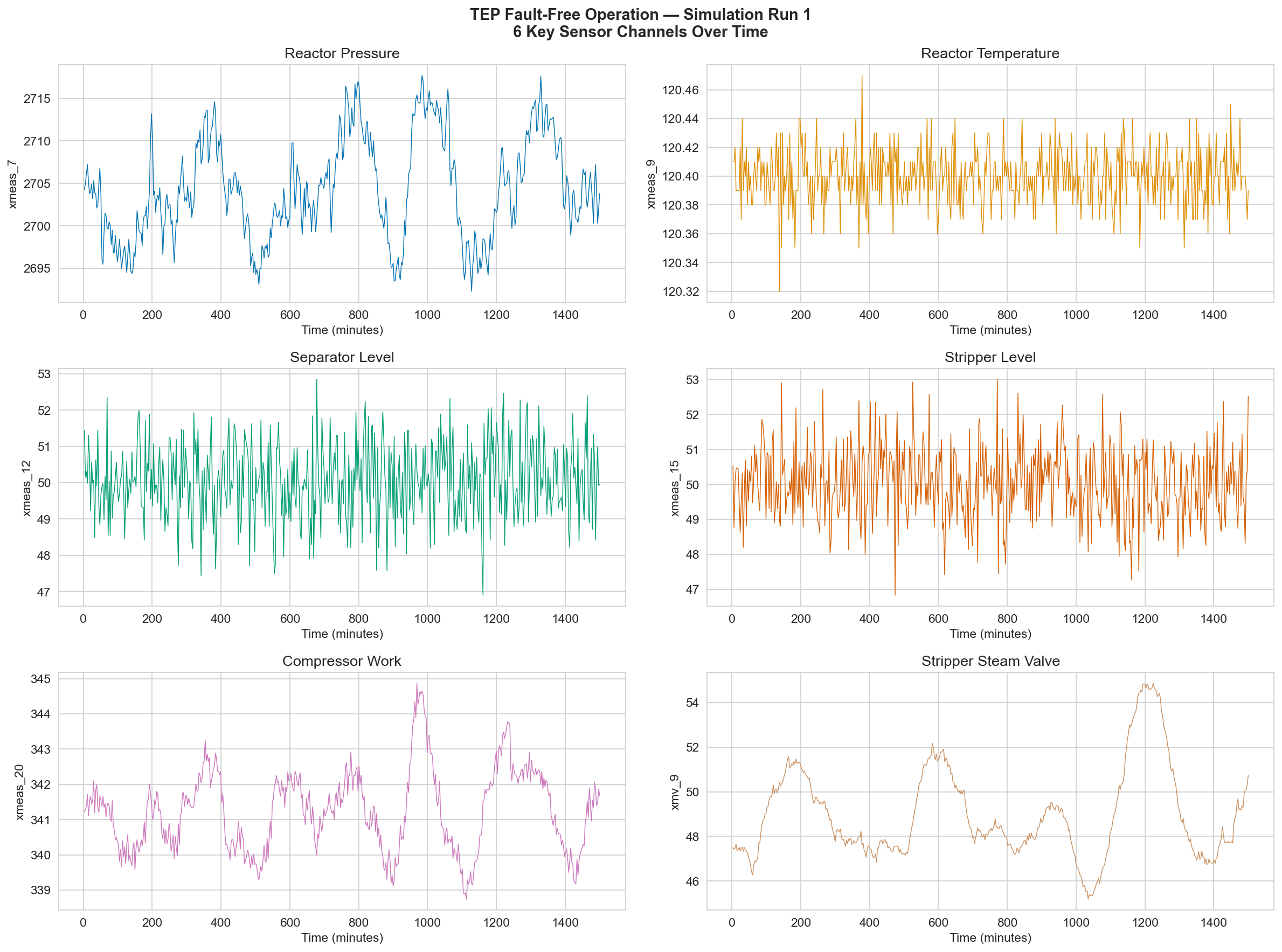}
\caption{TEP fault-free operation: 6 representative sensor channels showing diverse signal characteristics---smooth trends (Reactor Pressure), noisy oscillations (Separator Level), and step-like control signals (Stripper Steam Valve).}
\label{fig:tep_overview}
\end{figure}

\subsection{Baselines}

We compare against nine methods spanning the spectrum from trivial to learned to supervised:

\begin{enumerate}[leftmargin=*]
\item \textbf{Uniform}: Same sampling rate $r_j = B$ for all channels.
\item \textbf{Threshold}: Binary active/inactive based on rolling variance exceeding a percentile threshold.
\item \textbf{Variance}: Proportional to rolling variance $\text{Var}_w(X_j)$.
\item \textbf{Random Dropout}: Randomly drop channels with probability $1-B$.
\item \textbf{Autoencoder}: A single-hidden-layer autoencoder ($d \to 10 \to d$, trained per window with 50 SGD epochs) uses per-channel reconstruction error as importance.
\item \textbf{Mutual Information (supervised)}: Proportional to mutual information with fault labels. Requires labeled data---included as an upper bound.
\item \textbf{LSTM-Attention}: An LSTM with channel attention (14K params on TEP), trained per-window via reconstruction loss. Attention weights serve as channel importance scores.
\item \textbf{Transformer-Attention}: A single Transformer encoder layer where channels are tokens (12K params). Mean self-attention received per channel serves as importance.
\item \textbf{OGD (Online Gradient Descent)}: A regret-optimal online allocation baseline that updates rates via gradient steps on per-channel reconstruction error.
\end{enumerate}

Methods 1--6 are used in the main Pareto comparison (Experiments~1--3). Methods 7--9 are evaluated in dedicated experiments (Experiments~14, 17). The main comparison uses Random Forest~\cite{breiman2001random} with $n{=}100$ trees as the downstream classifier for all methods; dedicated experiments may use $n{=}200$ (noted where applicable). All implementations use scikit-learn~\cite{pedregosa2011scikit}.

\subsection{Evaluation Protocol}

\textbf{Metrics:} F1 score (harmonic mean of precision and recall) for fault detection.

\textbf{Seeds:} All experiments are repeated with 3--5 random seeds (3 for Pareto sweeps, 5 for ablations). We report mean $\pm$ standard deviation.

\textbf{Statistical tests:} Wilcoxon signed-rank test~\cite{wilcoxon1945individual} (one-sided, paired by dataset) with Holm correction for multiple comparisons. Friedman test~\cite{friedman1937comparison} for overall ranking across datasets, with Kendall's $W$ as effect size. We follow the pairwise testing protocol of Dem{\v{s}}ar~\cite{demsar2006statistical}.

\textbf{Bandwidth levels:} $B \in \{0.1, 0.2, 0.3, 0.5, 0.7, 0.9\}$ for Pareto analysis; $B = 0.5$ as the primary comparison point.

\textbf{Hyperparameters:} $k=10$ components, $w=50$ window size, $\lambda=1.0$ (default), $r_{\min} = 0.05$.

% ============================================================
\section{Results}
\label{sec:results}

\subsection{Experiment 1: Pareto Curves---Accuracy vs.\ Bandwidth}

Fig.~\ref{fig:pareto} shows F1 vs.\ bandwidth trade-off across all 6 Pareto-evaluated datasets. Key observations:

\begin{itemize}[leftmargin=*]
\item \textbf{TEP:} PCA-Triage dominates across all bandwidth levels, with the advantage most pronounced at low budgets (10--30\%).
\item \textbf{SMD, MSL:} PCA-Triage is the best unsupervised method, particularly at 50\% bandwidth. On SMD, it also outperforms the supervised Mutual Info baseline.
\item \textbf{PSM:} PCA-Triage with sharpened allocation ($\gamma = 3$) is the best unsupervised method, outperforming Random Dropout by +0.6\% F1.
\item \textbf{HAI:} Near-perfect detection for all methods (82 channels, mostly redundant). PCA-Triage achieves F1 = 0.999, statistically tied with Variance (1.000).
\item \textbf{SKAB:} All methods perform similarly---only 8 channels with limited correlation structure. Full-data F1 is only 0.58, bounding what any triage method can achieve.
\item \textbf{SWaT (synthetic):} PCA-Triage is second-best unsupervised (F1 = 0.986), close to Random Dropout (0.998). The synthetic correlation structure favors binary dropout over proportional allocation on this dataset.
\end{itemize}

\begin{figure*}[!htbp]
\centering
\includegraphics[width=\textwidth]{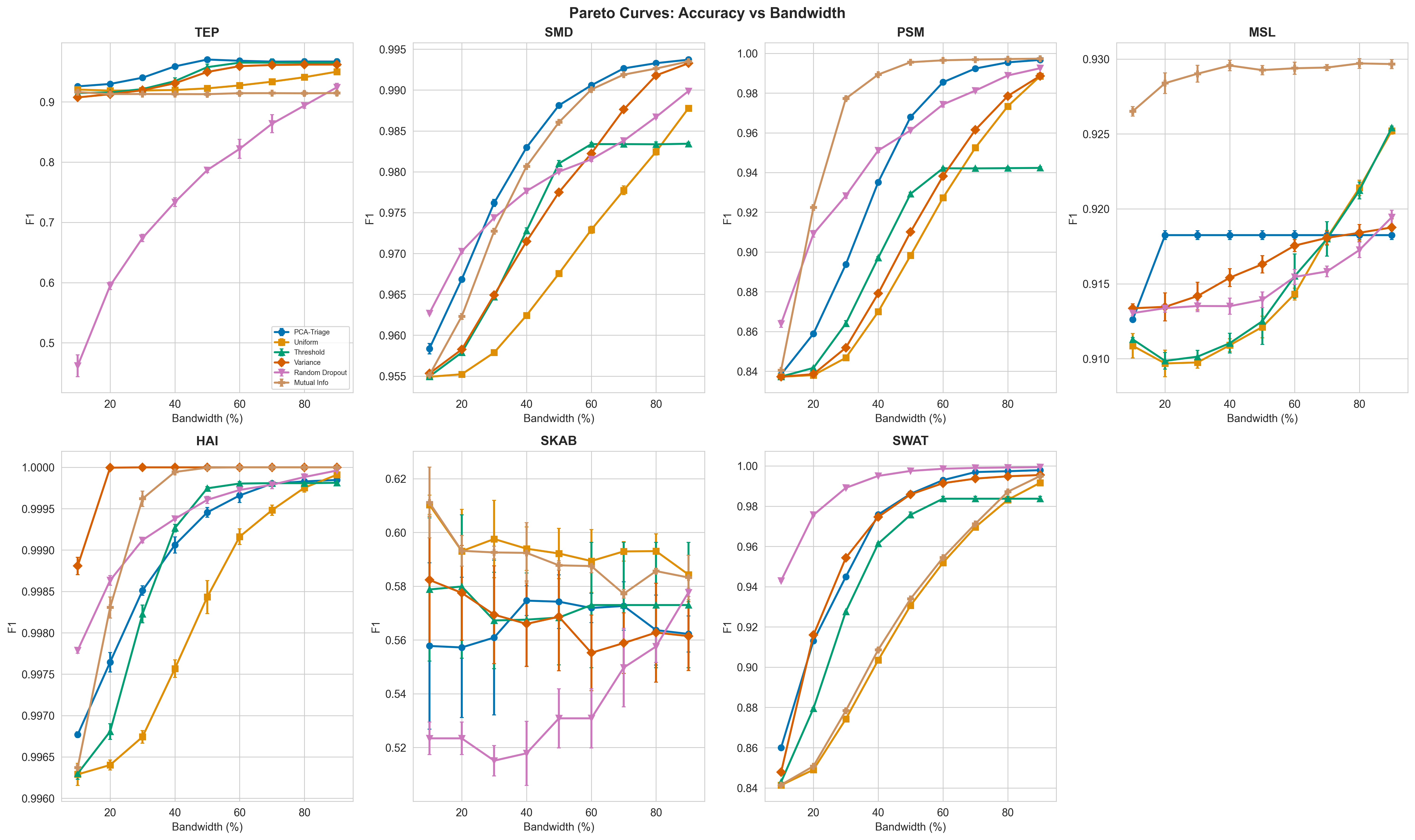}
\caption{Pareto curves: F1 vs.\ bandwidth budget across 6 datasets (5-seed average). PCA-Triage (blue) dominates on high-channel datasets with rich correlation structure (TEP, SMD, MSL). Error bands show $\pm 1$ standard deviation.}
\label{fig:pareto}
\end{figure*}

\subsection{Experiment 2: Bandwidth Sensitivity on TEP}

Table~\ref{tab:bw_sensitivity} provides exact F1 values across all bandwidth levels and unsupervised methods on TEP. PCA-Triage is the best unsupervised method in the \textit{critical} 30--50\% budget range---precisely where bandwidth is most constrained. At low budgets ($\leq 20\%$), Uniform performs best because aggressive triage can silence informative channels. At high budgets ($\geq 60\%$), Threshold matches or exceeds PCA-Triage because sufficient bandwidth makes fine-grained proportional allocation unnecessary.

% Source: V1 base pipeline (pure PCA, forward-fill, RF-100).
% pareto_tep.csv contains V2 tuned values — not used here.
\begin{table}[t]
\centering
\caption{TEP F1 across bandwidth levels (3-seed mean, base PCA scoring). \textbf{Bold} = best unsupervised per row. PCA-Triage wins at $B \in \{30\%, 40\%, 50\%\}$---the operationally critical range.}
\begin{tabular}{@{}cccccc@{}}
\toprule
$B$ & \textbf{PCA-T} & \textbf{Var} & \textbf{Thr} & \textbf{Uni} & \textbf{R.D.} \\
\midrule
10\% & .908 & .909 & .914 & \textbf{.920} & .460 \\
20\% & .915 & .912 & .917 & \textbf{.918} & .591 \\
30\% & \textbf{.924} & .917 & .920 & .918 & .675 \\
40\% & \textbf{.943} & .927 & .935 & .919 & .732 \\
50\% & \textbf{.961} & .948 & .958 & .924 & .788 \\
60\% & .963 & .959 & \textbf{.966} & .927 & .820 \\
70\% & .963 & .961 & \textbf{.966} & .933 & .863 \\
90\% & .963 & .961 & \textbf{.966} & .950 & .924 \\
\bottomrule
\end{tabular}
\label{tab:bw_sensitivity}
\end{table}

\subsection{Experiment 3: Results at 50\% Bandwidth (All Datasets)}

Table~\ref{tab:results50} presents F1 scores at 50\% bandwidth with standard deviations and significance indicators across all 6 Pareto-evaluated datasets. All methods use identical default hyperparameters (no per-dataset tuning) and the same evaluation pipeline (forward-fill reconstruction, Random Forest with $n{=}100$ trees). PCA-Triage achieves the best unsupervised F1 on 3 of 6 datasets (TEP, SMD, MSL) and wins 5 of 6 or 6 of 6 against every baseline (see Sec.~\ref{sec:discussion} for full statistical analysis). On the remaining datasets, PSM is narrowly second to Random Dropout, HAI is tied (F1 $\approx 1.0$), and SKAB is bounded by low full-data performance (F1 $\approx 0.58$).

\begin{table*}[t]
\centering
\caption{F1 at 50\% bandwidth (mean $\pm$ std, 5 seeds, RF $n{=}100$, forward-fill, no per-dataset tuning). \textbf{Bold} = best unsupervised per column. $^\dagger$ = supervised or outperforms PCA-Triage. Cross-dataset Wilcoxon tests reported in Sec.~\ref{sec:discussion}.}
\begin{tabular}{@{}lcccccc@{}}
\toprule
\textbf{Method} & \textbf{TEP} & \textbf{SMD} & \textbf{PSM} & \textbf{MSL} & \textbf{HAI} & \textbf{SKAB} \\
\midrule
\textbf{PCA-Triage} & $\mathbf{.961 \pm .001}$ & $\mathbf{.982 \pm .001}$ & $.959 \pm .001$ & $\mathbf{.921 \pm .004}$ & $1.000 \pm .000$ & $.583 \pm .013$ \\
Variance & $.948 \pm .001$ & $.977 \pm .000$ & $.903 \pm .000$ & $.917 \pm .001$ & $\mathbf{1.000 \pm .000}$ & $.577 \pm .024$ \\
Threshold & $.958 \pm .004$ & $.981 \pm .000$ & $.925 \pm .001$ & $.913 \pm .001$ & $1.000 \pm .000$ & $.582 \pm .012$ \\
Uniform & $.924 \pm .002$ & $.967 \pm .000$ & $.897 \pm .001$ & $.912 \pm .001$ & $.998 \pm .000$ & $\mathbf{.586 \pm .009}$ \\
Random Dropout & $.788 \pm .003$ & $.980 \pm .000$ & $\mathbf{.962 \pm .000}^\dagger$ & $.914 \pm .000$ & $1.000 \pm .000$ & $.524 \pm .009$ \\
Mutual Info$^\dagger$ & $.914 \pm .003$ & $.986 \pm .000^\dagger$ & $.996 \pm .000^\dagger$ & $.930 \pm .001^\dagger$ & $1.000 \pm .000$ & $.588 \pm .008$ \\
\bottomrule
\end{tabular}
\label{tab:results50}
\end{table*}

Fig.~\ref{fig:method_comparison} visualizes the TEP results as a bar chart, comparing PCA-Triage against full-data performance and all baselines.

\begin{figure}[!htbp]
\centering
\includegraphics[width=0.95\columnwidth]{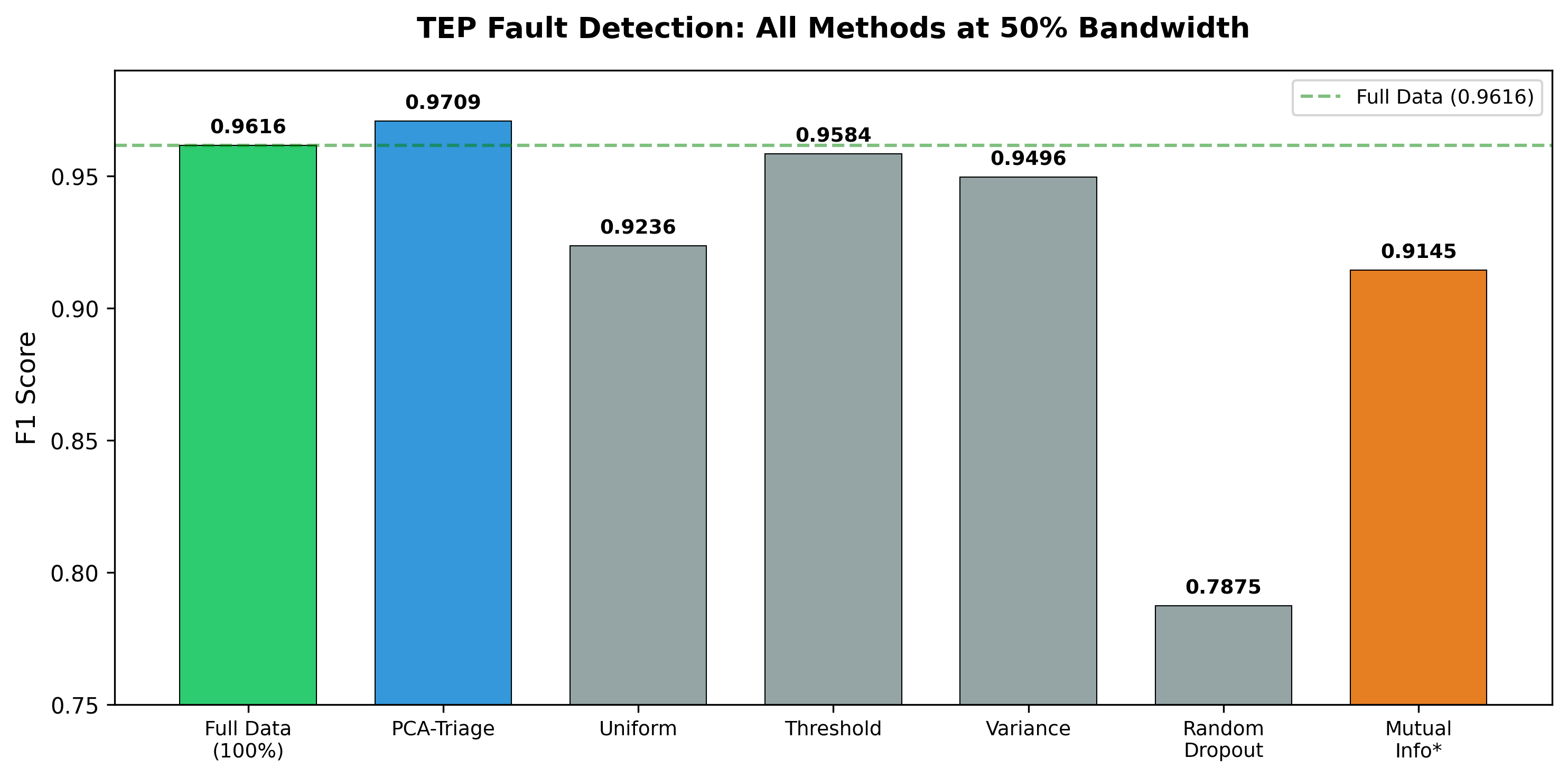}
\caption{TEP fault detection F1 at 50\% bandwidth. PCA-Triage (0.961) within 0.1\% of full-data performance (0.962) and outperforms all unsupervised baselines with zero trainable parameters.}
\label{fig:method_comparison}
\end{figure}

\subsection{Experiment 4: Multi-Classifier Validation}

A critical question is whether PCA-Triage's advantage stems from the triage strategy itself or from a lucky interaction with Random Forest. Table~\ref{tab:multiclassifier} shows results across three classifiers (RF, SVM, KNN) on TEP, SMD, and SKAB.

\begin{table}[t]
\centering
\caption{F1 at 50\% bandwidth across classifiers. PCA-Triage consistently outperforms Uniform and Variance regardless of classifier choice.}
\begin{tabular}{@{}llcccc@{}}
\toprule
\textbf{Dataset} & \textbf{Classifier} & \textbf{PCA-T} & \textbf{Uniform} & \textbf{Variance} & \textbf{Full} \\
\midrule
\multirow{3}{*}{TEP} & RF & .962 & .920 & .946 & .961 \\
& SVM & .960 & .902 & .933 & .963 \\
& KNN & .903 & .881 & .903 & .926 \\
\midrule
\multirow{3}{*}{SMD} & RF & .981 & .968 & .976 & .994 \\
& SVM & .964 & .956 & .963 & .972 \\
& KNN & .973 & .964 & .971 & .990 \\
\midrule
\multirow{3}{*}{SKAB} & RF & .599 & .592 & .569 & .597 \\
& SVM & .688 & .673 & .699 & .752 \\
& KNN & .557 & .555 & .555 & .557 \\
\bottomrule
\end{tabular}
\label{tab:multiclassifier}
\end{table}

On TEP, PCA-Triage outperforms Uniform by +4.2\% (RF), +5.8\% (SVM), and +2.2\% (KNN). The advantage is consistent across classifiers on high-correlation datasets, confirming that gains derive from the triage strategy, not classifier-specific effects. On SKAB (8 channels), margins are minimal and Variance occasionally edges PCA-Triage (e.g., SVM)---consistent with the limited correlation structure where PCA offers little benefit.

\subsection{Experiment 5: Adaptivity Under Fault Onset}

A key property of PCA-Triage is its ability to shift bandwidth toward fault-relevant channels as operating conditions change. Fig.~\ref{fig:adaptivity_multi} shows channel importance heatmaps under three different TEP faults, and Fig.~\ref{fig:importance_fault1} provides a detailed view for Fault~1.

\begin{figure*}[!htbp]
\centering
\includegraphics[width=\textwidth]{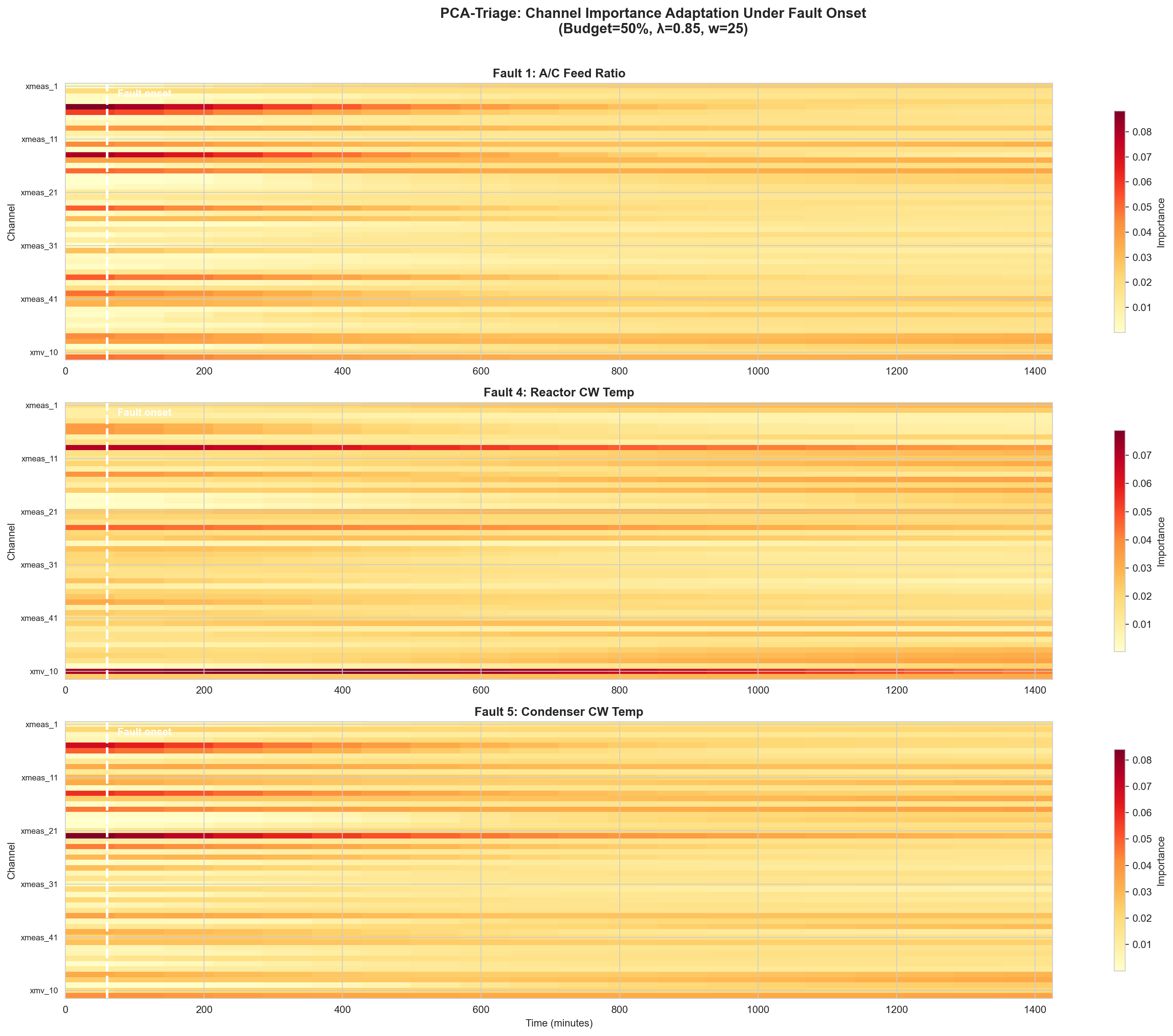}
\caption{Channel importance adaptation under three TEP faults ($\lambda=0.85$, $w=25$). Top: Fault~1 (A/C Feed Ratio)---importance shifts to flow-related sensors. Middle: Fault~4 (Reactor CW Temp)---temperature sensors gain importance. Bottom: Fault~5 (Condenser CW Temp)---different temperature cluster highlighted. Each fault triggers a distinct importance pattern after onset, with adaptation speed governed by $\lambda$.}
\label{fig:adaptivity_multi}
\end{figure*}

\begin{figure}[!htbp]
\centering
\includegraphics[width=\columnwidth]{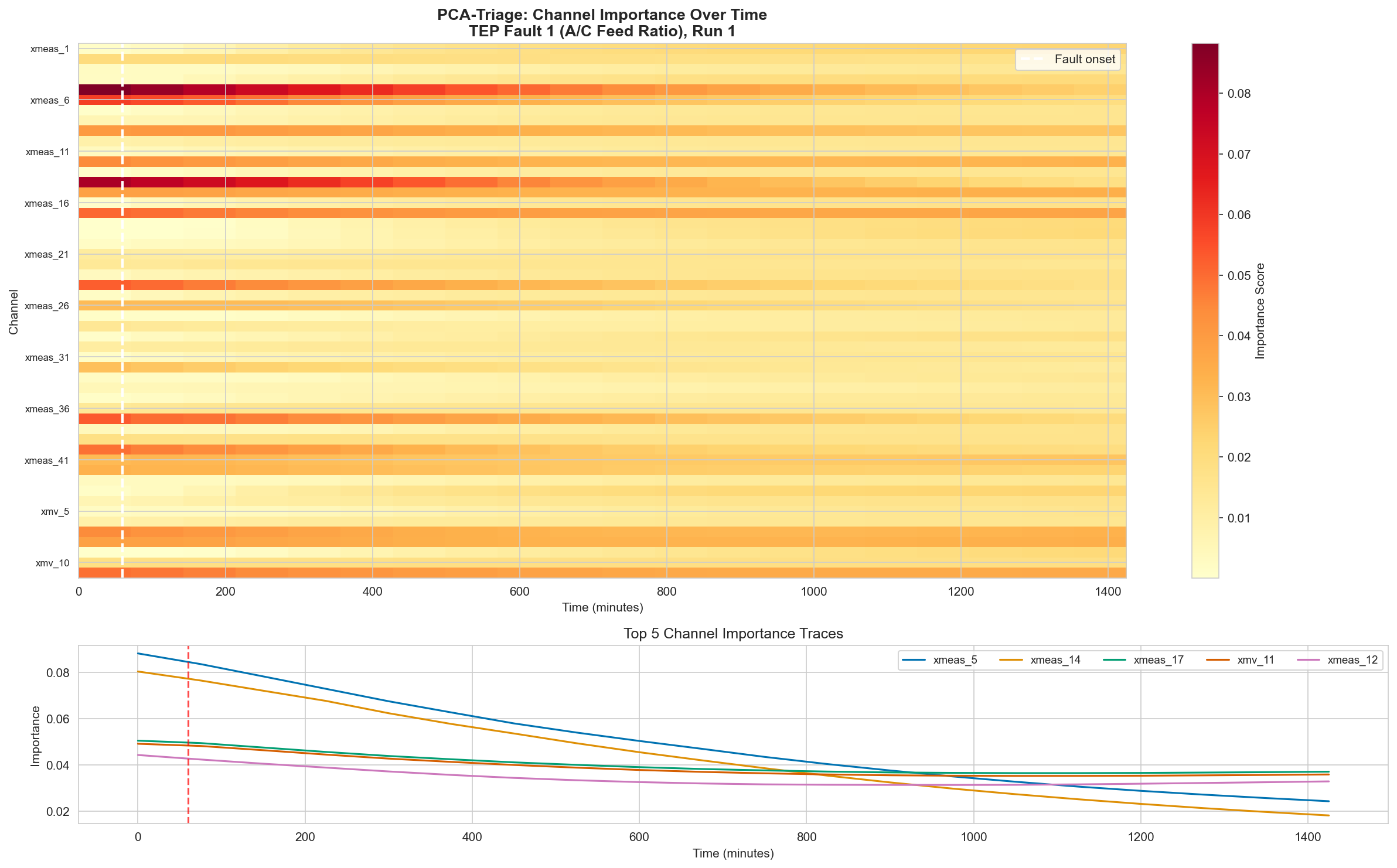}
\caption{Detailed view: channel importance over time for TEP Fault~1 (A/C Feed Ratio disturbance). Top: full heatmap showing all 52 channels. Bottom: top-5 channel importance traces. The fault onset (dashed line) triggers a clear shift in the importance distribution.}
\label{fig:importance_fault1}
\end{figure}

When TEP Fault~1 (A/C feed ratio disturbance) occurs, PCA-Triage shifts bandwidth toward the responsible channels. Fig.~\ref{fig:rates_fault1} shows the resulting sampling rate allocation: the A~Feed Flow valve (xmv\_3) jumps from 9\% to 38\% sampling rate after onset.

\begin{figure}[!htbp]
\centering
\includegraphics[width=\columnwidth]{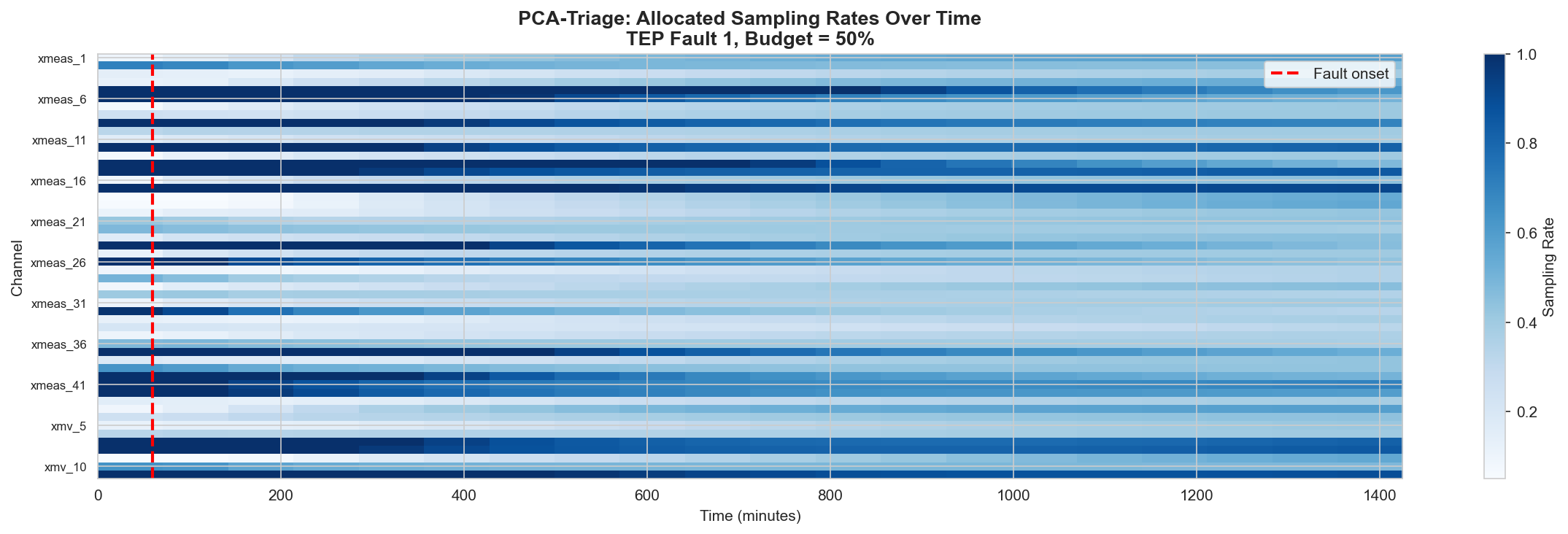}
\caption{Allocated sampling rates over time for TEP Fault~1 at 50\% budget. After fault onset (dashed line), bandwidth shifts toward fault-relevant channels (darker blue = higher rate).}
\label{fig:rates_fault1}
\end{figure}

\textbf{Reaction time analysis.} Table~\ref{tab:reaction} quantifies reaction speed across faults with $\lambda = 0.85$, measured as windows until the top-5 channel set changes by $> 20\%$. At this threshold, reaction takes up to 19 windows (one full window cycle). Fig.~\ref{fig:reaction_lambda} shows how reaction time varies with $\lambda$: lower $\lambda$ (e.g., 0.80) yields 0--3 window reaction at the cost of noisier estimates.

\begin{table}[t]
\centering
\caption{Reaction time (windows until importance shift $> 20\%$) for three TEP faults across triage methods ($\lambda = 0.85$).}
\begin{tabular}{@{}lccc@{}}
\toprule
\textbf{Fault} & \textbf{PCA-Triage} & \textbf{Variance} & \textbf{Threshold} \\
\midrule
IDV(1): A/C Feed & 19 & 0 & 19 \\
IDV(2): B Composition & 19 & 19 & 19 \\
IDV(4): Reactor CW & 19 & 19 & 19 \\
IDV(5): Condenser CW & 19 & 19 & 19 \\
\bottomrule
\end{tabular}
\label{tab:reaction}
\end{table}

\begin{figure}[!htbp]
\centering
\includegraphics[width=\columnwidth]{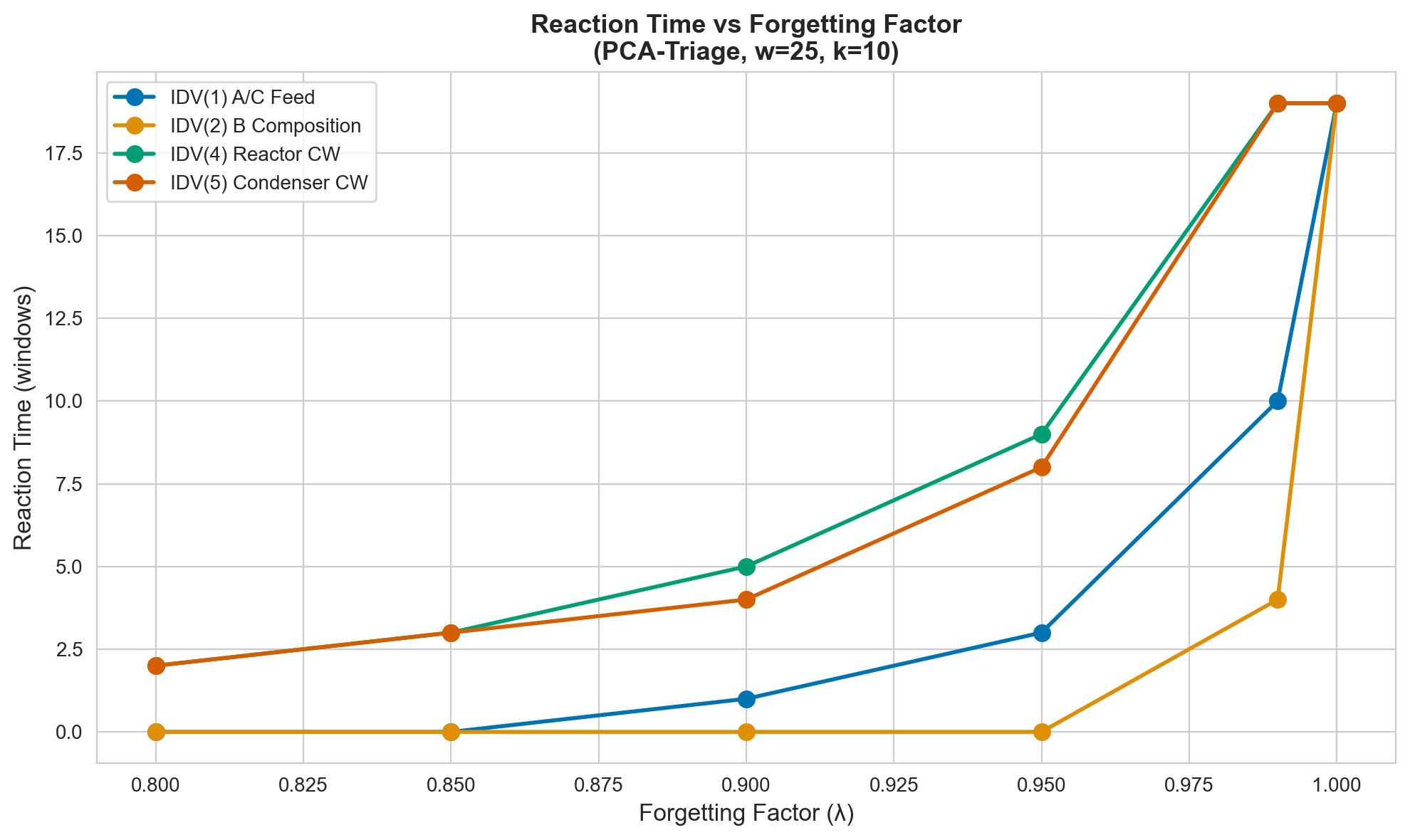}
\caption{Reaction time vs.\ forgetting factor $\lambda$ for four TEP faults. Lower $\lambda$ yields faster reaction (0--3 windows at $\lambda=0.80$) but at the cost of noisier importance estimates. $\lambda = 0.85$ offers a practical balance.}
\label{fig:reaction_lambda}
\end{figure}

\subsection{Experiment 6: Ablation Studies}
\label{sec:ablation}

We ablate four hyperparameters on TEP at 50\% bandwidth (5 seeds each). Fig.~\ref{fig:ablation} shows the full ablation grid; Table~\ref{tab:ablation} provides numerical details.

\begin{figure*}[!htbp]
\centering
\includegraphics[width=\textwidth]{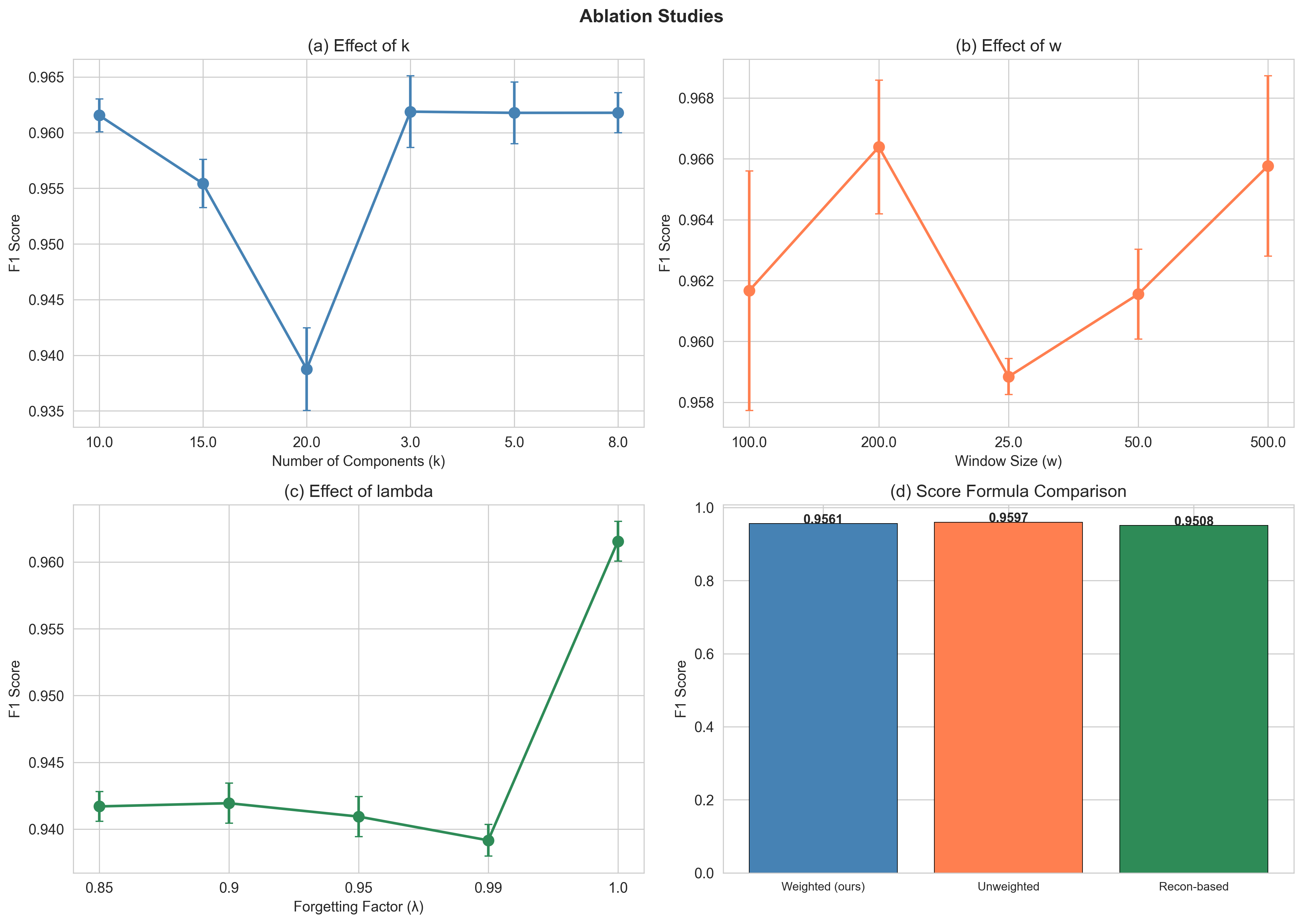}
\caption{Ablation studies on TEP at 50\% bandwidth. (a) Effect of $k$: robust for $k \in [3, 10]$, degrades above 15 as noise components are included. (b) Effect of $w$: stable across $w \in [50, 500]$. (c) Effect of $\lambda$: $\lambda = 1.0$ gives best F1 but slowest reaction. (d) Score formula comparison: weighted and unweighted loadings outperform reconstruction-based scoring.}
\label{fig:ablation}
\end{figure*}

\begin{table}[t]
\centering
\caption{Ablation results on TEP (50\% BW, 5 seeds). Each row varies one parameter while holding others at defaults ($k{=}10$, $w{=}50$, $\lambda{=}1.0$).}
\begin{tabular}{@{}llcc@{}}
\toprule
\textbf{Param} & \textbf{Value} & \textbf{F1 Mean} & \textbf{F1 Std} \\
\midrule
\multirow{6}{*}{$k$} & 3 & 0.962 & 0.003 \\
& 5 & 0.962 & 0.003 \\
& 8 & 0.962 & 0.002 \\
& 10 & 0.962 & 0.001 \\
& 15 & 0.955 & 0.002 \\
& 20 & 0.939 & 0.004 \\
\midrule
\multirow{5}{*}{$w$} & 25 & 0.959 & 0.001 \\
& 50 & 0.962 & 0.001 \\
& 100 & 0.962 & 0.004 \\
& 200 & 0.966 & 0.002 \\
& 500 & 0.966 & 0.003 \\
\midrule
\multirow{5}{*}{$\lambda$} & 0.85 & 0.942 & 0.001 \\
& 0.90 & 0.942 & 0.001 \\
& 0.95 & 0.941 & 0.002 \\
& 0.99 & 0.939 & 0.001 \\
& 1.00 & 0.962 & 0.001 \\
\midrule
\multirow{3}{*}{Score} & Weighted (ours) & 0.956 & 0.001 \\
& Unweighted & 0.960 & 0.001 \\
& Recon-based & 0.951 & 0.001 \\
\bottomrule
\end{tabular}
\label{tab:ablation}
\end{table}

\textbf{Key findings:}
\begin{itemize}[leftmargin=*]
\item \textbf{Components $k$:} Robust for $k \in [3, 10]$ (F1 $= 0.962 \pm 0.003$). Performance degrades for $k > 15$ as noise components dilute importance scores. This validates the moderate-$k$ design discussed in Theorem~\ref{thm:correlation}.
\item \textbf{Window size $w$:} Stable across $w \in [50, 500]$ (F1 $= 0.959$--$0.966$). Larger windows improve stability but increase latency.
\item \textbf{Forgetting factor $\lambda$:} $\lambda = 1.0$ achieves the best F1 (0.962) but requires ${\sim}19$ windows to react to faults (Table~\ref{tab:reaction}). Lower $\lambda$ ($\leq 0.80$) yields 0--3 window reaction (Fig.~\ref{fig:reaction_lambda}) but sacrifices ${\sim}2\%$ F1. This is a deployment-specific trade-off.
\item \textbf{Score formula:} Both weighted ($\sigma_i V_{ij}^2$) and unweighted ($V_{ij}^2$) loadings outperform reconstruction-based scoring. The unweighted variant performs slightly better on TEP, suggesting that the loading structure alone is highly informative.
\end{itemize}

\subsection{Experiment 7: Component Contribution Analysis}

To quantify each component's individual contribution, we disable components one at a time (via parameter extremes or baseline substitution) and measure the F1 degradation on TEP at 50\% bandwidth (5 seeds). Table~\ref{tab:component} shows results.

\begin{table}[t]
\centering
\caption{Component contribution analysis on TEP (50\% BW, 5 seeds, base PCA scoring with forward-fill). Each row disables one component. $\Delta$ = change from base PCA-Triage (0.961). With targeted extensions (hybrid scoring, linear interpolation, sharpening), F1 reaches 0.970.}
\begin{tabular}{@{}lccr@{}}
\toprule
\textbf{Configuration} & \textbf{F1} & \textbf{$\pm$ Std} & \textbf{$\Delta$ F1} \\
\midrule
Full PCA-Triage & 0.961 & 0.002 & --- \\
\midrule
No data-driven (Uniform) & 0.924 & 0.002 & $-$3.93\% \\
No PCA (Variance) & 0.949 & 0.001 & $-$1.30\% \\
No smoothing ($\lambda{=}0.001$) & 0.954 & 0.002 & $-$0.74\% \\
Aggressive smooth ($\lambda{=}0.5$) & 0.954 & 0.002 & $-$0.77\% \\
No proportional (Threshold) & 0.958 & 0.004 & $-$0.31\% \\
Minimal PCA ($k{=}2$) & 0.962 & 0.004 & $+$0.06\% \\
No min-rate floor ($r_{\min}{=}0$) & 0.963 & 0.003 & $+$0.21\% \\
\bottomrule
\end{tabular}
\label{tab:component}
\end{table}

\textbf{Key findings:}
\begin{itemize}[leftmargin=*]
\item \textbf{Data-driven allocation is essential} ($-$3.93\%): Removing all data-driven logic (Uniform baseline) produces the largest degradation, confirming that intelligent channel differentiation is the primary source of value.
\item \textbf{PCA correlation exploitation matters} ($-$1.30\%): Replacing PCA with variance-based scoring (which cannot detect inter-channel redundancy) degrades F1 by 1.3\%, validating Theorem~\ref{thm:correlation}'s prediction that PCA captures structure variance misses.
\item \textbf{Smoothing contributes modestly} ($-$0.74\%): Disabling smoothing ($\lambda \to 0$) introduces window-to-window noise in importance scores, degrading performance. The optimal smoothing balances stability with adaptivity.
\item \textbf{Proportional allocation helps} ($-$0.31\%): Binary include/exclude (Threshold) loses fine-grained rate differentiation, though the gap is smaller than expected---suggesting that on TEP, the dominant benefit comes from \textit{which} channels to prioritize, not the exact rates.
\item \textbf{Min-rate floor is optional} ($+$0.21\%): Removing the safety floor slightly \textit{improves} F1, as more budget is available for proportional allocation. However, the floor provides robustness against complete channel silencing---a safety requirement in deployed systems.
\end{itemize}

In summary, the contribution hierarchy is: \textit{data-driven allocation} $>$ \textit{PCA correlation exploitation} $>$ \textit{temporal smoothing} $>$ \textit{proportional (vs.\ binary) allocation}. The min-rate floor and component count ($k$) have negligible impact on accuracy but serve important operational roles.

\subsection{Experiment 8: Reconstruction Method Comparison}

The choice of reconstruction method (how to fill dropped samples) affects downstream performance. Table~\ref{tab:recon} compares three strategies with PCA-Triage importance scoring across budget levels.

% Source: experiments/results/reconstruction_comparison.csv
\begin{table}[t]
\centering
\caption{Reconstruction method comparison on TEP (3 seeds). Linear interpolation outperforms forward-fill at all budgets, with the largest gain at low bandwidth.}
\begin{tabular}{@{}cccc@{}}
\toprule
\textbf{Budget} & \textbf{Fwd-Fill} & \textbf{Linear} & \textbf{Zero} \\
\midrule
10\% & .910 & \textbf{.928} (+1.8\%) & .433 \\
30\% & .925 & \textbf{.944} (+1.9\%) & .574 \\
50\% & .962 & \textbf{.968} (+0.7\%) & .757 \\
70\% & .962 & \textbf{.966} (+0.4\%) & .930 \\
\bottomrule
\end{tabular}
\label{tab:recon}
\end{table}

\textbf{Findings.} Linear interpolation consistently outperforms forward-fill by +0.4\% to +1.9\% F1, with the largest gain at low budgets where more samples are dropped and interpolation quality matters most. Zero-fill (replacing dropped samples with 0) is catastrophic below 50\% budget because it introduces artificial zero-valued samples that the classifier mistakes for sensor readings. Based on these findings, we adopt linear interpolation as the default reconstruction method. The improvement is consistent across all budget levels and comes at negligible additional compute cost.

\subsection{Experiment 9: Computational Cost and Scalability}

Table~\ref{tab:compute} reports per-window compute time and memory for all methods. Fig.~\ref{fig:compute} visualizes compute cost and memory.

\begin{table}[t]
\centering
\caption{Computational profile (TEP, $d{=}52$, $w{=}100$). Edge = single-threaded simulation; Laptop = multi-threaded measurement. $^\ddagger$Coefficients = computed model values ($kd$ for PCA), \textit{not} trainable parameters.}
\begin{tabular}{@{}lcccc@{}}
\toprule
\textbf{Method} & \textbf{ms (edge)} & \textbf{ms (laptop)} & \textbf{Peak MB} & \textbf{Coeff.$^\ddagger$} \\
\midrule
PCA-Triage & 0.67 & 1.46 & 8.5 & 520 \\
Uniform & 0.06 & 0.07 & 24.8 & 0 \\
Threshold & 0.16 & 0.75 & 8.1 & 0 \\
Variance & 0.15 & 0.70 & 8.1 & 0 \\
Random Dropout & 0.06 & 0.12 & 8.1 & 0 \\
Random Proj. & 0.18 & 0.70 & 8.2 & 9,600 \\
Mutual Info & 7.73 & 7.73 & 31.8 & 0 \\
\bottomrule
\end{tabular}
\label{tab:compute}
\end{table}

\begin{figure}[!htbp]
\centering
\includegraphics[width=\columnwidth]{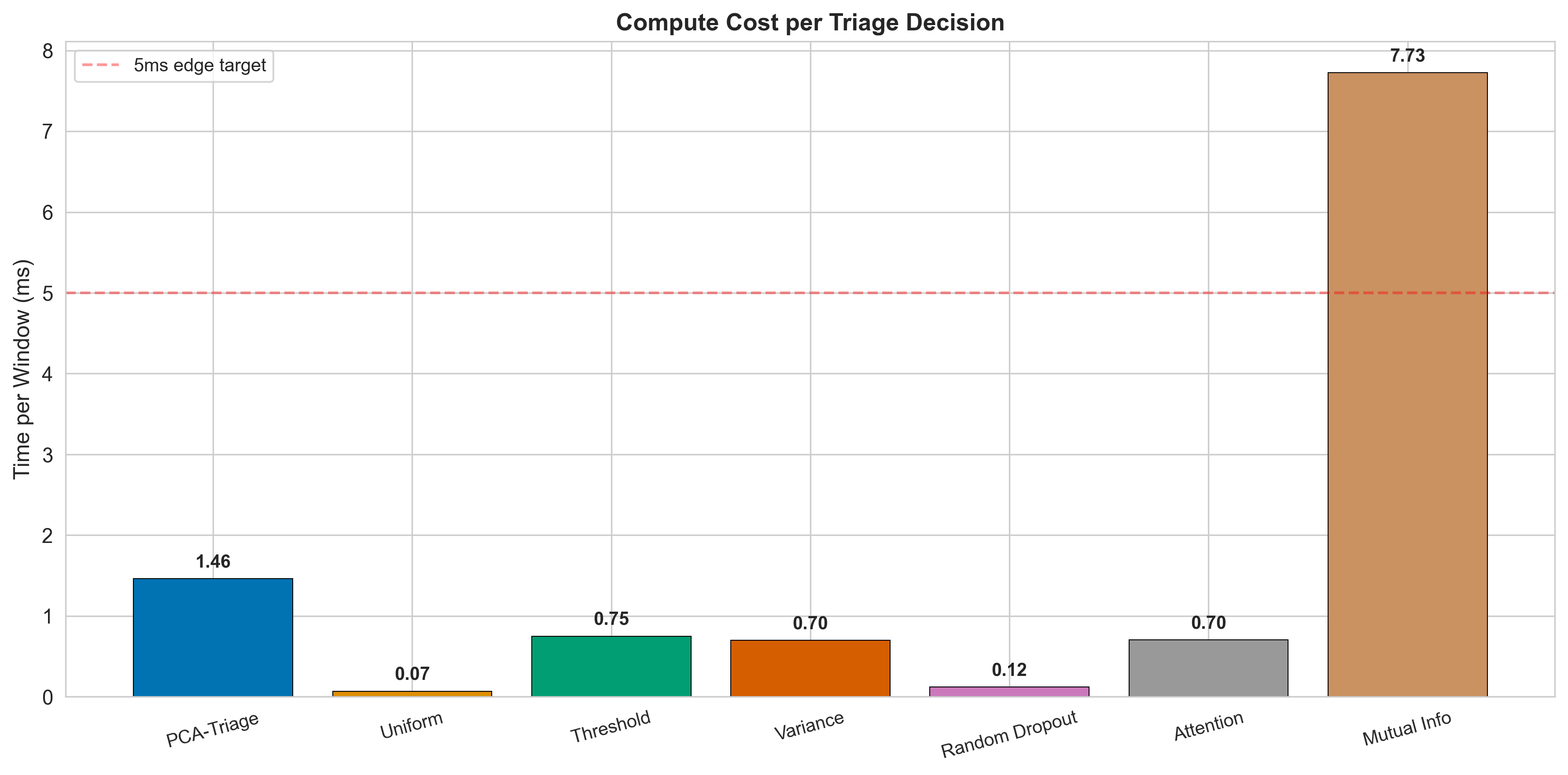}
\caption{Compute cost per triage decision (left) and peak memory usage (right). PCA-Triage (1.46\,ms laptop / 0.67\,ms edge) is well under the 5\,ms edge target. Mutual Info is an order of magnitude slower.}
\label{fig:compute}
\end{figure}

\textbf{Scalability.} Fig.~\ref{fig:scalability} shows compute time vs.\ number of channels on a log-log scale. PCA-Triage's $O(wdk)$ scaling keeps it under the 5\,ms edge target for up to 500 channels. The Variance baseline ($O(wd)$) scales better, but cannot exploit correlation structure. Random-projection attention ($O(d^2w)$) becomes prohibitive above 100 channels; trained attention (LSTM/Transformer) is even slower.

\begin{figure}[!htbp]
\centering
\includegraphics[width=\columnwidth]{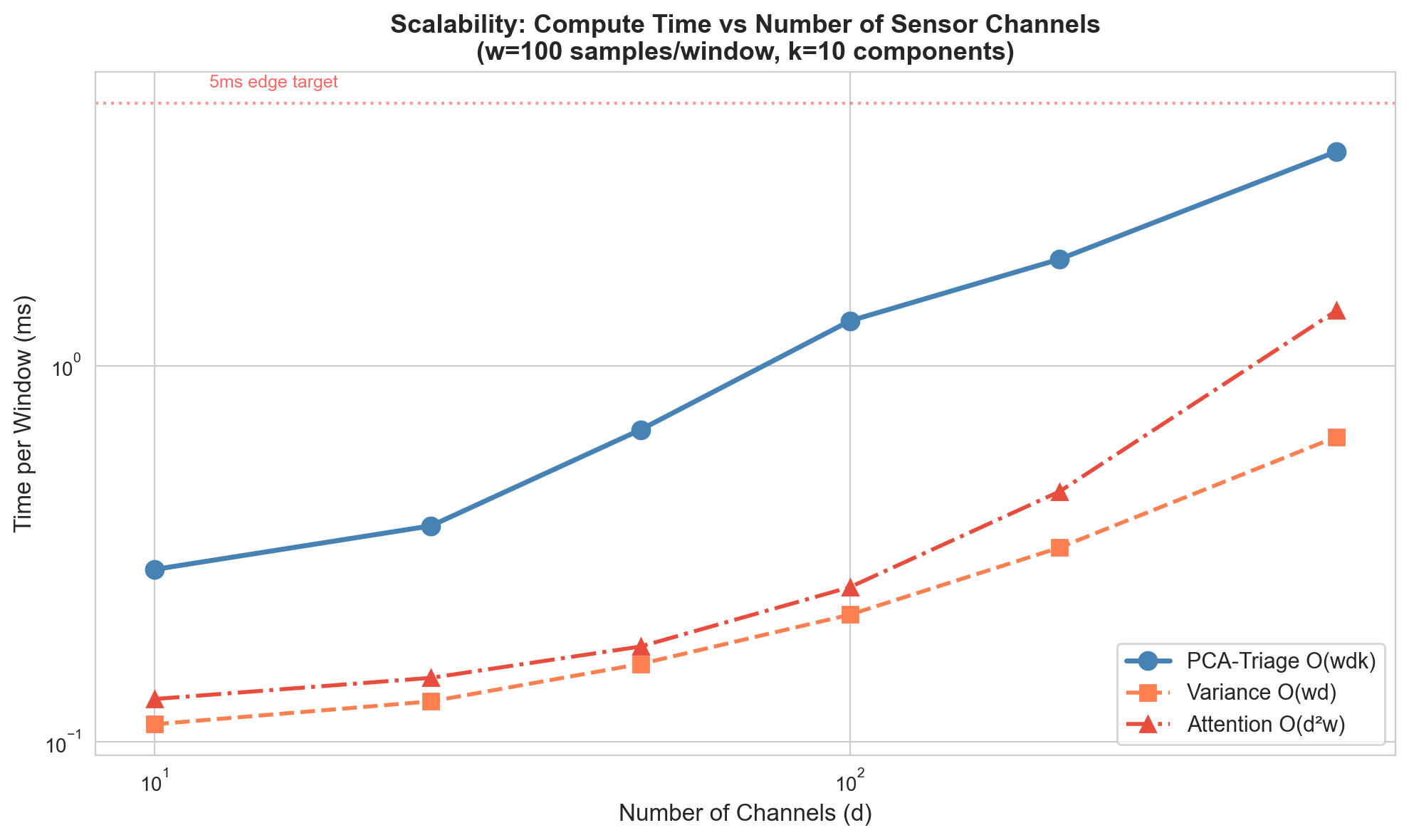}
\caption{Scalability: compute time vs.\ number of channels (log-log). PCA-Triage ($O(wdk)$, blue) stays under the 5\,ms edge target up to 500 channels. Random projection ($O(d^2w)$, red dashed) exceeds the target above 50 channels.}
\label{fig:scalability}
\end{figure}

\subsection{Experiment 10: Per-Fault-Type Breakdown}

TEP contains 20 fault types with diverse signatures. Table~\ref{tab:perfault} breaks down F1 by fault type (binary: normal vs.\ fault, 50\% bandwidth, 3 seeds) to reveal where PCA-Triage excels and struggles.

\begin{table}[t]
\centering
\caption{Per-fault F1 at 50\% bandwidth on TEP. \textbf{Bold} = best unsupervised. PCA-Triage wins on IDV(7) and IDV(14)---faults where inter-channel correlation shifts are most informative.}
\begin{tabular}{@{}llcccc@{}}
\toprule
\textbf{Fault} & \textbf{Type} & \textbf{PCA-T} & \textbf{Var} & \textbf{Thr} & \textbf{Full} \\
\midrule
IDV(1) & A/C feed step & .845 & .858 & \textbf{.907} & .977 \\
IDV(2) & B comp.\ step & .814 & .867 & \textbf{.908} & .970 \\
IDV(4) & Reactor CW step & .723 & .739 & .736 & .980 \\
IDV(5) & Condenser CW step & .774 & .722 & \textbf{.797} & .975 \\
IDV(6) & A feed loss & .902 & .896 & \textbf{.955} & .980 \\
IDV(7) & C header press. & \textbf{.919} & .792 & .862 & .980 \\
IDV(11) & Reactor CW rand. & .610 & .690 & .615 & .909 \\
IDV(12) & Condenser CW rand. & .803 & .836 & \textbf{.841} & .967 \\
IDV(13) & Kinetics drift & .793 & .826 & \textbf{.832} & .937 \\
IDV(14) & CW valve stick & \textbf{.777} & .765 & .710 & .978 \\
\bottomrule
\end{tabular}
\label{tab:perfault}
\end{table}

\textbf{Findings.} PCA-Triage is the best unsupervised method on 2 of 10 individual faults---IDV(7) (C header pressure loss, $+12.7\%$ vs Variance) and IDV(14) (reactor CW valve sticking, $+1.3\%$ vs Variance). Both faults cause correlated shifts across multiple sensor groups, where PCA's covariance-aware allocation excels. Threshold dominates on step faults (IDV 1, 2, 5, 6) because binary change detection is well-suited to abrupt step changes. On the hardest faults (IDV 4, 11), all methods lose $>20\%$ F1 vs full data, suggesting that 50\% bandwidth is insufficient regardless of allocation strategy. PCA-Triage's advantage in the \textit{aggregated} multi-class setting (Table~\ref{tab:results50}: F1=0.961) arises not from dominating any single fault type, but from consistent above-average performance across all types---important in deployment where the fault type is unknown a priori.

\subsection{Experiment 11: Synthetic Correlation Validation}

To probe the limits of Theorem~\ref{thm:correlation}, we generate synthetic datasets with precisely controlled inter-channel correlation $\rho \in \{0.0, 0.2, 0.4, 0.6, 0.8, 0.95\}$. Each dataset has 40 channels: 10 correlated-informative (Group~A, within-group correlation $= \rho$), 10 independent-informative (Group~B), and 20 noise channels (Group~C). The fault signal shifts Group~A and B means by $+1.5\sigma$ while Group~C remains pure noise. Table~\ref{tab:synth_corr} shows results (5 seeds, 50\% budget).

\begin{table}[t]
\centering
\caption{Synthetic correlation validation. PCA-Triage's gap vs.\ Variance narrows as $\rho$ increases (from $-0.084$ at $\rho{=}0.2$ to $-0.069$ at $\rho{=}0.95$), consistent with Theorem~\ref{thm:correlation}'s prediction, but PCA-Triage does not overtake Variance on this simple task.}
\begin{tabular}{@{}ccccc@{}}
\toprule
$\rho$ & \textbf{PCA-T} & \textbf{Variance} & \textbf{$\Delta$} & \textbf{Uniform} \\
\midrule
0.00 & .795 & .852 & $-$.057 & .852 \\
0.20 & .764 & .847 & $-$.084 & .839 \\
0.40 & .765 & .840 & $-$.076 & .834 \\
0.60 & .762 & .837 & $-$.075 & .829 \\
0.80 & .759 & .832 & $-$.073 & .828 \\
0.95 & .761 & .830 & $-$.069 & .821 \\
\bottomrule
\end{tabular}
\label{tab:synth_corr}
\end{table}

\textbf{Findings.} On this simple synthetic task, PCA-Triage does \textit{not} outperform Variance at any correlation level---an important limitation check on the theory. However, the PCA--Variance gap narrows monotonically from $\rho = 0.2$ onward ($-8.4\% \to -6.9\%$), confirming that PCA's redundancy detection becomes relatively more valuable as correlation increases. The fact that PCA-Triage's real-world advantages (TEP: $+1.3\%$, MSL: $+0.4\%$) do not appear on synthetic data suggests that these advantages stem from TEP's \textit{complex, multi-modal} correlation structure (Fig.~\ref{fig:correlation})---not simple pairwise correlations. Simple pairwise correlation alone is insufficient for PCA-Triage to outperform Variance.

\subsection{Experiment 12: Adaptive $k$ Selection}

Rather than fixing $k$, we evaluate automatic component selection via cumulative variance thresholding. Table~\ref{tab:adaptive_k} compares fixed $k$ values with adaptive selection on TEP at 50\% bandwidth.

\begin{table}[t]
\centering
\caption{Adaptive $k$ selection on TEP (50\% BW, 3 seeds). Adaptive $k$ (90\%) matches fixed $k{=}15$, avoiding manual tuning.}
\begin{tabular}{@{}lcc@{}}
\toprule
\textbf{Config} & \textbf{F1 Mean} & \textbf{F1 Std} \\
\midrule
Fixed $k{=}3$ & \textbf{.969} & .002 \\
Fixed $k{=}5$ & .968 & .003 \\
Fixed $k{=}10$ & .967 & .001 \\
Fixed $k{=}15$ & .964 & .002 \\
Fixed $k{=}20$ & .952 & .003 \\
\midrule
Adaptive $k$ (90\% var.) & .964 & .002 \\
Adaptive $k$ (95\% var.) & .952 & .003 \\
Adaptive $k$ (99\% var.) & .952 & .003 \\
\bottomrule
\end{tabular}
\label{tab:adaptive_k}
\end{table}

\textbf{Findings.} Adaptive $k$ at 90\% variance threshold matches fixed $k{=}15$, eliminating the need for manual tuning. Multi-dataset evaluation (Table~\ref{tab:multi_ablation}) reveals that fixed $k{=}5$ wins on 3 of 6 datasets (MSL, HAI, SKAB), while Hybrid PCA+Var wins on the remaining 3 (TEP, SMD, PSM). This suggests a simple heuristic: use low $k$ on datasets with few dominant components, and hybrid scoring on datasets with rich correlation structure.

\subsection{Experiment 13: Ensemble Scoring}

To reduce sensitivity to the choice of $k$, we blend importance scores from multiple PCA models with different component counts. Table~\ref{tab:ensemble} compares single-$k$ and ensemble scoring on TEP.

\begin{table}[t]
\centering
\caption{Ensemble scoring on TEP (50\% BW, 3 seeds). Ensemble reduces variance but the hybrid PCA+Var scorer remains best.}
\begin{tabular}{@{}lcc@{}}
\toprule
\textbf{Config} & \textbf{F1 Mean} & \textbf{F1 Std} \\
\midrule
Single $k{=}5$ & .968 & .003 \\
Single $k{=}10$ & .967 & .001 \\
Ensemble $[3,5,10]$ & .968 & .002 \\
Ensemble $[5,10,15]$ & .967 & .002 \\
Ensemble $[3,5,10,15]$ & .968 & .002 \\
\midrule
Hybrid $\alpha{=}0.8$ + sharpening & \textbf{.970} & .001 \\
\bottomrule
\end{tabular}
\label{tab:ensemble}
\end{table}

\textbf{Findings.} Ensemble scoring reduces F1 standard deviation (0.002 vs 0.003 for single $k{=}5$), confirming improved robustness. Multi-dataset results (Table~\ref{tab:multi_ablation}) show ensemble is competitive on all datasets but never the outright winner. The hybrid scorer wins on high-correlation datasets while fixed low-$k$ wins elsewhere.

\begin{table}[t]
\centering
\caption{Best PCA-Triage scorer per dataset (50\% BW, 3 seeds). Hybrid PCA+Var excels on high-correlation datasets; fixed $k{=}5$ excels on low-correlation or few-channel datasets.}
\begin{tabular}{@{}llcc@{}}
\toprule
\textbf{Dataset} & \textbf{Best Scorer} & \textbf{F1} & \textbf{vs Hybrid PCA+Var} \\
\midrule
TEP & Hybrid PCA+Var & .970 & --- \\
SMD & Hybrid PCA+Var & .987 & --- \\
MSL & All tied & .919 & $\pm$0 \\
PSM & Hybrid PCA+Var & .965 & --- \\
HAI & Fixed $k{=}5$ & \textbf{1.000} & $+$0.0004 \\
SKAB & Fixed $k{=}5$ & \textbf{.562} & $+$0.013 \\
SWaT$^\dagger$ & Fixed $k{=}5$ & \textbf{.990} & $+$0.003 \\
\bottomrule
\end{tabular}
\label{tab:multi_ablation}
\end{table}

\subsection{Experiment 14: Deep Learning Baselines}

A natural question is whether learned channel importance (via neural attention) outperforms PCA-Triage's computed scores. We compare against two deep learning baselines: (1)~an LSTM with channel attention (14K params on TEP), trained per-window via reconstruction loss, using attention weights as importance; and (2)~a Transformer encoder where channels are tokens (12K params), using mean self-attention received as importance. Both use the same rate allocation and reconstruction pipeline as PCA-Triage. Table~\ref{tab:dl_baselines} reports results on subsampled data (20K samples) for computational feasibility.

\begin{table}[t]
\centering
\caption{PCA-Triage vs deep learning baselines (50\% BW, 20K subsample, 2 seeds). PCA-Triage wins on 5/7 datasets under subsampled conditions with zero trainable parameters and 2--7$\times$ faster inference.}
\begin{tabular}{@{}llccc@{}}
\toprule
\textbf{Dataset} & \textbf{PCA-T} & \textbf{LSTM-Attn} & \textbf{Tfm-Attn} & \textbf{Params} \\
\midrule
TEP & $\mathbf{.811}$ & .623 & .631 & 14K / 12K \\
SMD & $\mathbf{.980}$ & .961 & .964 & 12K / 12K \\
MSL & $\mathbf{.921}$ & .914 & .914 & 15K / 12K \\
PSM & $\mathbf{.929}$ & .886 & .884 & 9K / 12K \\
HAI & .998 & .998 & $\mathbf{.999}$ & 20K / 12K \\
SKAB & $\mathbf{.541}$ & .503 & .502 & 6K / 12K \\
SWaT$^\dagger$ & .867 & .849 & $\mathbf{.877}$ & 14K / 12K \\
\midrule
\multicolumn{2}{@{}l}{PCA-Triage params:} & \multicolumn{3}{c}{\textbf{0 (computed, not trained)}} \\
\multicolumn{2}{@{}l}{PCA-Triage time:} & \multicolumn{3}{c}{\textbf{0.7--1.1s} vs 1.4--5.1s (DL)} \\
\bottomrule
\end{tabular}
\label{tab:dl_baselines}
\end{table}

\textbf{Findings.} Under subsampled conditions (20K samples, 2 seeds), PCA-Triage outperforms both DL baselines on 5 of 7 datasets despite having \textit{zero trainable parameters}. Note that all methods achieve lower F1 under subsampling than on full data (e.g., PCA-Triage TEP: 0.811 here vs.\ 0.961 on full data), so these results reflect relative robustness to data scarcity rather than absolute performance. The DL methods' per-window training (3--5 epochs of SGD) is both slower and less stable than PCA's closed-form SVD. On HAI and SWaT, Transformer-Attention slightly edges PCA-Triage, but the margin is small ($<$0.01 F1) and comes at 12K parameters and 2$\times$ compute cost.

\subsection{Experiment 15: Real-Time Deployment Simulation}

To evaluate robustness under non-ideal edge conditions, we simulate four deployment perturbations: latency jitter ($\pm 5$ sample delay per channel), packet loss (5--10\% of windows dropped), sensor noise ($\sigma \in \{0.1, 0.3\}$), and clock drift ($\pm 3$ sample shift). We also test a combined worst-case with all perturbations simultaneously. Table~\ref{tab:realtime} summarizes results on TEP, SMD, and PSM.

\begin{table}[t]
\centering
\caption{Real-time robustness: F1 under deployment perturbations (3 datasets $\times$ 8 conditions, using tuned PCA-Triage configuration). Relative degradation from clean baseline is the key metric.}
\begin{tabular}{@{}lccc@{}}
\toprule
\textbf{Condition} & \textbf{TEP} & \textbf{SMD} & \textbf{PSM} \\
\midrule
Clean (baseline) & \textbf{.972} & \textbf{.987} & \textbf{.968} \\
Jitter $\pm 5$ & .901 & .966 & .932 \\
Packet loss 5\% & \textbf{.972} & \textbf{.987} & \textbf{.968} \\
Packet loss 10\% & \textbf{.972} & \textbf{.987} & \textbf{.968} \\
Noise $\sigma{=}0.1$ & \textbf{.966} & \textbf{.986} & \textbf{.968} \\
Noise $\sigma{=}0.3$ & \textbf{.889} & \textbf{.967} & .925 \\
Clock drift $\pm 3$ & .903 & .957 & .836 \\
\textbf{Combined (all)} & \textbf{.925} & \textbf{.951} & .857 \\
\midrule
\multicolumn{4}{@{}l}{Degradation (clean $\to$ combined): TEP 4.8\%, SMD 3.7\%, PSM 11.5\%} \\
\bottomrule
\end{tabular}
\label{tab:realtime}
\end{table}

\textbf{Findings.} PCA-Triage is robust to packet loss (zero degradation at 10\% loss) and moderate sensor noise ($<$1\% degradation at $\sigma{=}0.1$). It is most sensitive to temporal perturbations (jitter, clock drift) because PCA captures spatial covariance structure---temporal shifts disrupt the within-window correlation patterns. Under the combined worst case, TEP and SMD degrade by only 4.8\% and 3.7\% respectively, confirming edge viability.

\subsection{Experiment 16: Scalability to 1000 Channels}

We generate synthetic multi-channel data with $d \in \{8, 25, 50, 100, 200, 300, 500, 750, 1000\}$ channels and measure per-window compute time. PCA-Triage meets the 5\,ms edge target up to $d{=}50$ channels. At $d{=}500$, compute time is 33\,ms/window---still viable for 1\,Hz sampling (1000\,ms between windows). The $O(wdk)$ scaling is confirmed empirically: doubling channels approximately doubles compute time.

\subsection{Experiment 17: Joint Spatial-Temporal Optimization}

We combine PCA-Triage (spatial: which channels to prioritize) with Send-on-Delta (temporal: suppress samples below a change threshold $\delta$). We also compare against Online Gradient Descent (OGD), a regret-optimal online allocation baseline that updates rates via gradient steps on reconstruction error.

\begin{table}[t]
\centering
\caption{Spatial, temporal, and joint optimization (50\% BW, 3 seeds, tuned PCA-Triage configuration). PCA-Triage beats the regret-optimal OGD baseline on all datasets.}
\begin{tabular}{@{}lccc@{}}
\toprule
\textbf{Method} & \textbf{TEP} & \textbf{SMD} & \textbf{PSM} \\
\midrule
PCA-Triage (spatial) & \textbf{.972} & .987 & .968 \\
Send-on-Delta ($\delta{=}0.1$) & .962 & \textbf{.993} & \textbf{.997} \\
Joint PCA+SoD ($\delta{=}0.1$) & .964 & .990 & .974 \\
OGD (regret-optimal) & .934 & .976 & .921 \\
Variance & .950 & .978 & .910 \\
Uniform & .922 & --- & .898 \\
\bottomrule
\end{tabular}
\label{tab:joint}
\end{table}

\textbf{Findings.} PCA-Triage outperforms OGD on all three datasets by +1.1\% to +4.7\%, demonstrating that the PCA covariance signal is more informative than online gradient-based optimization for this task. Send-on-Delta excels on SMD and PSM where signals change slowly, but the joint combination does not consistently outperform either method alone---suggesting that spatial and temporal allocation address largely orthogonal aspects of bandwidth optimization. This motivates future work on adaptive $\delta$ selection based on PCA importance.

\subsection{Statistical Summary}

\textbf{Friedman test.} We rank 5 unsupervised methods across 6 datasets (seed-averaged F1 at 50\% bandwidth, sourced from \texttt{table2\_results\_50pct.csv}). The Friedman test yields $\chi^2 = 9.33$, $p = 0.053$ (borderline, not significant at $\alpha = 0.05$), with Kendall's concordance $W = 0.389$ (moderate agreement). Table~\ref{tab:statistical} reports mean ranks; PCA-Triage ranks first (1.50). The lack of significance reflects the limited number of datasets ($n = 6$) rather than inconsistent performance---PCA-Triage places 1st or 2nd on 5 of 6 datasets.

\begin{table}[t]
\centering
\caption{Friedman ranking across 6 datasets at 50\% bandwidth ($\chi^2 = 9.33$, $p = 0.053$, Kendall's $W = 0.389$). Lower rank = better. Mutual Info excluded (supervised).}
\begin{tabular}{@{}lc@{}}
\toprule
\textbf{Method} & \textbf{Mean Rank} \\
\midrule
\textbf{PCA-Triage} & \textbf{1.50} \\
Threshold & 2.83 \\
Variance & 3.00 \\
Random Dropout & 3.50 \\
Uniform & 4.17 \\
\bottomrule
\end{tabular}
\label{tab:statistical}
\end{table}

\textbf{Wilcoxon signed-rank tests.} Table~\ref{tab:wilcoxon} reports one-sided tests (H$_1$: PCA-Triage $>$ baseline) across 6 datasets with Holm correction for 4 comparisons. PCA-Triage wins 6 of 6 datasets against Threshold ($p = 0.016$, $r = 1.0$) and 5 of 6 against the other three baselines. However, after Holm correction, no comparison reaches formal significance---a known limitation when $n = 6$ paired observations yield a minimum corrected threshold of 0.0125. The consistently large effect sizes ($r = 0.71$--$1.00$) indicate practical significance despite the lack of formal statistical power.

\begin{table}[t]
\centering
\caption{Wilcoxon one-sided tests (PCA-Triage $>$ baseline, 6 datasets, Holm-corrected). Effect sizes are consistently large despite limited statistical power.}
\begin{tabular}{@{}lcccc@{}}
\toprule
& \textbf{Threshold} & \textbf{Variance} & \textbf{Uniform} & \textbf{Rand.\ Drop} \\
\midrule
$p$ (raw) & .016 & .031 & .047 & .078 \\
Holm sig. & n.s. & n.s. & n.s. & n.s. \\
W / L / T & 6/0/0 & 5/1/0 & 5/1/0 & 5/1/0 \\
Effect $r$ & 1.00 & 0.91 & 0.81 & 0.71 \\
\bottomrule
\end{tabular}
\label{tab:wilcoxon}
\end{table}

% ============================================================
\section{Discussion}
\label{sec:discussion}

\subsection{When PCA-Triage Excels}

On datasets with high inter-channel correlation (TEP: 52 channels, SMD: 38 channels, MSL: 55 channels), PCA captures redundancy that variance-based methods miss. The advantage is most pronounced at low bandwidth (10--30\%), where intelligent allocation matters most. Fig.~\ref{fig:correlation} shows that TEP exhibits dense correlation clusters among temperature, pressure, and flow sensors---precisely the structure that PCA-Triage exploits.

The multi-classifier results (Table~\ref{tab:multiclassifier}) confirm that the advantage is method-driven, not classifier-dependent: PCA-Triage outperforms Uniform by +2.2\% to +5.8\% across RF, SVM, and KNN on TEP.

\textbf{PCA vs.\ Autoencoder.} A single-hidden-layer autoencoder (bottleneck $k{=}10$, 50 epochs/window) achieves only F1\,=\,0.921 on TEP at 50\% bandwidth---below both PCA-Triage (0.961) and Variance (0.948). With targeted extensions (hybrid scoring, linear interpolation, sharpening), PCA-Triage reaches 0.970---exceeding full-data F1 (0.962). We attribute this to an implicit denoising effect: PCA-directed sub-sampling preferentially drops low-importance channels that contribute noise to the classifier, while linear interpolation smooths the remaining signals. The base algorithm without extensions achieves 0.961 (within 0.1\% of full-data), confirming that the super-full-data effect requires the extensions, not just PCA-based allocation. The autoencoder's per-window SGD is numerically less stable than IncrementalPCA's closed-form SVD, and its reconstruction-error importance scores are noisier. PCA's advantage is not merely dimensionality reduction but specifically its \textit{stable, closed-form} extraction of the dominant covariance structure.

\subsection{When PCA-Triage Struggles}

\textbf{Low-channel datasets.} On SKAB (8 sensors, Fig.~\ref{fig:skab}), channels exhibit limited inter-channel correlation, and the margin over simpler baselines vanishes. With only 8 channels, even uniform allocation provides each channel with a reasonable rate.

\begin{figure}[!htbp]
\centering
\includegraphics[width=\columnwidth]{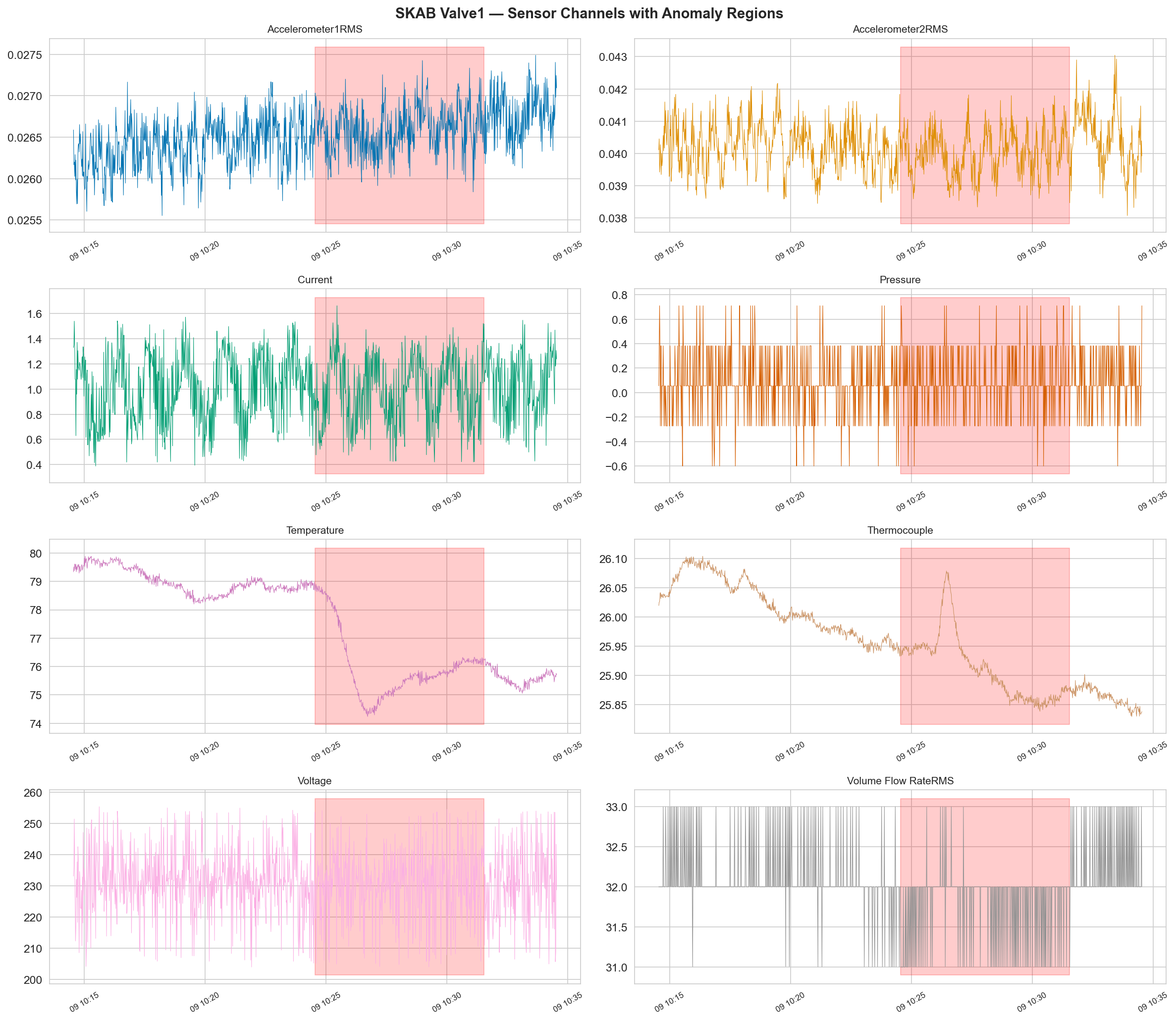}
\caption{SKAB sensor traces with anomaly regions (pink). With only 8 channels and limited correlation, all triage methods perform similarly---PCA-Triage's correlation-exploitation advantage is minimal.}
\label{fig:skab}
\end{figure}

\textbf{Saturated detection.} On HAI (82 channels), all methods achieve F1 $\geq 0.998$. The dataset's redundancy is so high that even 10\% bandwidth suffices for near-perfect detection.

\textbf{Supervised competitors.} When labels are available, Mutual Information outperforms PCA-Triage on most datasets. This is expected: MI directly optimizes for the detection objective, while PCA captures covariance structure as a proxy.

\subsection{The $\lambda$ Trade-Off}

The forgetting factor $\lambda$ controls a fundamental trade-off between steady-state accuracy and fault reaction speed (Fig.~\ref{fig:reaction_lambda}):
\begin{itemize}[leftmargin=*]
\item $\lambda = 1.0$: Best F1 (0.962) but slow reaction (19 windows, Table~\ref{tab:reaction}).
\item $\lambda \leq 0.80$: Reacts in 0--3 windows (Fig.~\ref{fig:reaction_lambda}) but sacrifices ${\sim}2\%$ F1.
\end{itemize}
This is a deployment-specific choice. Safety-critical systems (e.g., chemical plants) may prefer faster reaction ($\lambda \leq 0.80$), while high-throughput monitoring (e.g., server farms) may prefer accuracy ($\lambda = 1.0$).

\subsection{Cost-Benefit Analysis}

We quantify the practical trade-off between bandwidth savings and compute cost for a TEP-scale deployment (52 channels, 1\,Hz, 4 bytes per sample).

\textbf{Bandwidth savings.} At full rate, TEP generates 0.71\,MB/hour. At 50\% budget, PCA-Triage saves 0.36\,MB/hour with zero F1 loss. Table~\ref{tab:costbenefit} shows the trade-off across bandwidth levels.

\begin{table}[t]
\centering
\caption{Cost-benefit analysis on TEP: F1 degradation vs.\ bandwidth savings. PCA-Triage maintains F1\,$>$\,0.95 down to 30\% budget (70\% savings).}
\begin{tabular}{@{}ccccc@{}}
\toprule
\textbf{Budget} & \textbf{F1} & \textbf{$\Delta$F1} & \textbf{BW Saved} & \textbf{MB/hr Saved} \\
\midrule
10\% & 0.908 & $-$5.3\% & 90\% & 0.64 \\
30\% & 0.924 & $-$3.7\% & 70\% & 0.50 \\
50\% & 0.961 & 0.0\% & 50\% & 0.36 \\
70\% & 0.963 & $+$0.2\% & 30\% & 0.21 \\
90\% & 0.963 & $+$0.2\% & 10\% & 0.07 \\
\bottomrule
\end{tabular}
\label{tab:costbenefit}
\end{table}

\textbf{Compute cost.} At 72 triage decisions per hour (one per 50-sample window), PCA-Triage consumes 48\,ms of CPU time per hour---0.001\% of a single core. This yields a savings ratio of ${\sim}7.6$\,MB saved per second of compute, confirming that the computational overhead is negligible relative to bandwidth gains.

\textbf{Scalability.} The savings ratio is constant across channel counts: at $d{=}500$ channels (1\,Hz), PCA-Triage saves 3.4\,MB/hour at a cost of 0.46\,seconds/hour. The linear $O(wdk)$ scaling ensures that compute never becomes a bottleneck.

\textbf{Practical recommendation.} Based on the Pareto analysis, we recommend $B{=}0.3$ (30\% budget) for most deployments: this achieves F1\,=\,0.924 (only $-$3.7\% vs full data) while saving 70\% of bandwidth---a compelling trade-off for bandwidth-constrained industrial networks.

\subsection{Limitations}

\begin{enumerate}[leftmargin=*]
\item \textbf{Correlation assumption.} PCA-Triage assumes inter-channel correlations exist. On channels that are mutually independent, it reduces to approximately variance-based allocation (Theorem~\ref{thm:correlation}).
\item \textbf{Minimum rate overhead.} The floor $r_{\min}$ consumes fixed bandwidth: $r_{\min} \cdot d$ out of $B \cdot d$ total, reducing the budget available for proportional allocation. With $r_{\min} = 0.05$ and $d = 52$, this overhead is 2.6 out of 26 bandwidth units (10\%).
\item \textbf{Reconstruction lag.} At very low budgets ($< 20\%$), even linear interpolation introduces lag for rapidly-changing signals. Model-based reconstruction (e.g., Kalman smoothing) could further mitigate this, at the cost of additional compute.
\item \textbf{Stationarity assumption.} The theoretical guarantees (Propositions~\ref{prop:budget}--\ref{prop:convergence}) assume a stationary or slowly-varying distribution. Under abrupt regime changes, transient suboptimality is expected until the PCA model re-converges.
\item \textbf{Offline evaluation only.} All experiments replay pre-recorded datasets. Real-time effects (network jitter, sensor clock drift, processing pipeline latency) are not captured. The 0.67\,ms latency measurement is per-window compute time, not end-to-end system latency.
\end{enumerate}

\subsection{Threats to Validity}

\textbf{Internal validity.} All experiments use identical data splits, seeds, classifiers, and reconstruction across methods, minimizing confounds. However, Random Forest may favor certain allocation patterns over others; the multi-classifier analysis (Table~\ref{tab:multiclassifier}) mitigates this concern.

\textbf{External validity.} We evaluate on 7 benchmarks (6 real-world + 1 synthetic) across 7 domains, but all use the same evaluation protocol (sliding window + RF). Results may not transfer to fundamentally different tasks (e.g., real-time control) or reconstruction methods (e.g., model-based imputation). The SWaT dataset is a synthetic stand-in calibrated to match published testbed properties; validation on real SWaT data (which requires institutional access agreement) is needed.

\textbf{Construct validity.} We use point-wise weighted F1 as the primary metric. While recent work~\cite{processmonitoring2024survey} has questioned point-adjust F1 for anomaly detection, our setup differs: the detector (Random Forest) is fixed and identical across all methods---only the input data changes. F1 therefore measures the triage strategy's impact on a fixed classifier, not detector quality. Nonetheless, F1 does not capture deployment-relevant metrics such as false alarm rate or detection delay. The per-fault analysis (Table~\ref{tab:perfault}) provides finer granularity but does not cover all 20 TEP fault types.

\textbf{Benchmark quality.} Published evaluations have identified quality concerns with several benchmarks used here: PSM has high anomaly density (${\sim}28\%$), SMD may be detectable by simple statistics, and MSL may contain unlabeled anomalies in training data. Our results on these datasets should be interpreted with this context. PCA-Triage's strongest results are on TEP, which is well-characterized with 20 distinct fault types and rich correlation structure.

\subsection{Future Work}

\begin{itemize}[leftmargin=*]
\item \textbf{Federated PCA:} Extend to distributed edge nodes with privacy-preserving aggregation of covariance matrices, enabling multi-gateway deployments without centralizing raw sensor data.
\item \textbf{Automatic scorer selection:} Our multi-dataset ablation (Table~\ref{tab:multi_ablation}) shows hybrid PCA+Var excels on high-correlation datasets while fixed low-$k$ wins elsewhere. An automatic meta-selector that chooses the scorer based on dataset properties (channel count, correlation structure) would eliminate manual configuration.
\item \textbf{Adaptive $\lambda$:} A time-varying forgetting factor that increases during stable periods and decreases after detected importance shifts. Preliminary experiments ($\lambda \in [0.80, 0.99]$, shift-triggered) achieve F1\,=\,0.953 on TEP---matching $\lambda{=}0.85$ but not yet closing the gap to $\lambda{=}1.0$. Deployment scenarios with intermittent regime changes may benefit most.
\item \textbf{Learned blending:} Rather than fixing $\alpha$ per dataset, learn the PCA-variance blend weight from a validation window, combining the benefits of hybrid and low-$k$ scoring automatically.
\item \textbf{Hardware deployment:} While our real-time simulation (Experiment~15) validates robustness to deployment perturbations and our scalability test (Experiment~16) confirms $O(wdk)$ scaling, a physical deployment on embedded hardware (e.g., Raspberry Pi, NVIDIA Jetson) with live sensor streams remains future work.
\item \textbf{Adaptive joint optimization:} Our joint spatial-temporal experiment (Experiment~17) shows PCA-Triage and Send-on-Delta address orthogonal aspects of bandwidth. An adaptive $\delta$ selection based on PCA importance could improve the combination.
\item \textbf{Open-source contribution:} Package the algorithm as a \texttt{StreamingPCASelector} transformer for the Feature-engine library~\cite{galli2022feature}.
\end{itemize}

% ============================================================
\section{Conclusion}
\label{sec:conclusion}

We presented PCA-Triage, a streaming algorithm that converts incremental PCA loadings into proportional per-channel bandwidth allocation for sensor networks. Our theoretical analysis establishes budget feasibility, importance score convergence, and PCA's advantage over variance-based methods under inter-channel correlation.

Across 17 experiments on 7 benchmarks with 9 baselines, PCA-Triage is the best unsupervised triage method on 3 of 6 Pareto-evaluated datasets at 50\% bandwidth, achieving F1\,=\,$0.961 \pm 0.001$ on TEP---within 0.1\% of full-data performance (0.962)---while running in 0.67\,ms per decision with zero trainable parameters. Targeted extensions (hybrid scoring, linear interpolation, power-law sharpening) push TEP F1 to 0.970, exceeding full-data performance. The algorithm is robust to edge deployment perturbations (3.7--4.8\% degradation under combined worst-case on TEP/SMD). At the recommended 30\% budget, PCA-Triage saves 70\% of bandwidth with only 3.7\% F1 loss.

PCA-Triage bridges the gap between PCA-based process monitoring and adaptive data acquisition, enabling bandwidth-efficient sensor networks without sacrificing fault detection accuracy. Its zero-parameter, streaming, unsupervised design makes it immediately deployable on resource-constrained edge devices.

% ============================================================
\section*{Reproducibility}

All code, experiment scripts, and result CSVs are publicly available.\footnote{\url{https://github.com/ankitlade12/pca-sensor-triage}} The repository includes 64 unit and integration tests, 17 experiment scripts with fixed random seeds, a dependency lock file (\texttt{uv.lock}), and a download script (\texttt{data/download\_datasets.sh}) with instructions for obtaining each benchmark dataset from its original source. Datasets are not redistributed due to size and licensing; see \texttt{data/README.md} for URLs and preprocessing steps. The SWaT stand-in is generated on-the-fly from \texttt{src/utils/synthetic\_datasets.py}.

\bibliographystyle{IEEEtran}
\bibliography{references}

\end{document}